\documentclass[twoside,11pt]{article}

\usepackage{shortcuts_mm}
\usepackage{jmlr2e}
\usepackage[group-separator={,},group-minimum-digits=4]{siunitx}

\usepackage[nameinlink]{cleveref}
\newtheorem{assumption}[theorem]{Assumption}
\Crefname{proposition}{Proposition}{Propositions}
\crefname{problem}{Problem}{Problems}
\Crefname{lemma}{Lemma}{Lemmas}

\usepackage{aliascnt}
\newaliascnt{problem}{equation}
\aliascntresetthe{problem}
\creflabelformat{problem}{#2\textup{(#1)}#3}

\makeatletter

\def\endproblem{\eqno \hbox{\@eqnnum}$$\@ignoretrue}
\makeatother
\DeclareMathOperator{\relint}{relint}

\graphicspath{{./images/}}
\usepackage[export]{adjustbox}

\usepackage{lastpage}
\jmlrheading{21}{2020}{1-\pageref{LastPage}}{7/19; Revised
8/20}{10/20}{19-587}{Mathurin Massias, Samuel Vaiter, Alexandre Gramfort and Joseph Salmon}
\ShortHeadings{Dual Extrapolation for Sparse GLMs}{Massias, Vaiter, Gramfort and Salmon}

\firstpageno{1}

\usepackage[normalem]{ulem}
\begin{document}

\title{Dual Extrapolation for Sparse Generalized Linear Models}

\author{\name Mathurin Massias \email mathurin.massias@inria.fr \\
       \addr Universit\'e Paris-Saclay, Inria, CEA, Palaiseau, France\\
       \AND
       \name Samuel Vaiter  \email samuel.vaiter@u-bourgogne.fr \\
       \addr  CNRS \& Institut de Math\'ematiques de Bourgogne, 21078, Dijon, France\\
       \AND
       \name Alexandre Gramfort  \email alexandre.gramfort@inria.fr \\
       \addr Universit\'e Paris-Saclay, Inria, CEA, Palaiseau, France\\
       \AND
       \name Joseph Salmon  \email joseph.salmon@umontpellier.fr \\
       \addr IMAG, Univ Montpellier, CNRS, 34095, Montpellier, France \\
       }

\editor{David Wipf}

\maketitle
\begin{abstract}%

Generalized Linear Models (GLM) form a wide class of regression and classification models, where prediction is a function of a linear combination of the input variables.
For statistical inference in high dimension, sparsity inducing regularizations have proven to be useful while offering statistical guarantees.
However, solving the resulting optimization problems can be challenging: even for popular iterative algorithms such as coordinate descent, one needs to loop over a large number of variables.
To mitigate this, techniques known as \emph{screening rules} and \emph{working sets} diminish the size of the optimization problem at hand, either by progressively removing variables, or by solving a growing sequence of smaller problems.
For both techniques, significant variables are identified thanks to convex duality arguments.
In this paper, we show that the dual iterates of a GLM exhibit a Vector AutoRegressive (VAR) behavior after sign identification, when the primal problem is solved with proximal gradient descent or cyclic coordinate descent.
Exploiting this regularity, one can construct dual points that offer tighter certificates of optimality, enhancing the performance of screening rules and working set algorithms.
 \end{abstract}

\begin{keywords}
  Convex optimization, extrapolation, screening rules, working sets, Lasso, sparse logistic
regression, generalized linear models
\end{keywords}

\section{Introduction}
\label{sec:introduction}
Since the introduction of the Lasso \citep{Tibshirani96}, similar to the Basis Pursuit denoising \citep{Chen_Donoho95} in signal processing, sparsity inducing penalties have had a tremendous impact on machine learning \citep{Bach_Jenatton_Mairal_Obozinski12}.
They have been applied to a variety of statistical estimators, both for regression and classification tasks: sparse logistic regression~\citep{Koh_Kim_Boyd07}, Group Lasso \citep{Yuan_Lin06}, Sparse Group Lasso \citep{Simon_Friedman_Hastie_Tibshirani12}, multitask Lasso \citep{Obozinski_Taskar_Jordan10}, Square-Root Lasso \citep{Belloni_Chernozhukov_Wang11}.
All of these estimators fall under the framework of Generalized Linear Models~\citep{Mccullagh_Nelder1989}, where the output is assumed to follow an exponential family distribution whose mean is a linear combination of the input variables.
The key property of $\ell_1$ regularization is that it allows to jointly perform feature selection and prediction, which is particularly useful in high dimensional settings.
Indeed, it can drastically reduce the number of variables needed for prediction, thus improving model interpretability and computation time for prediction.
Amongst the algorithms proposed to solve these, coordinate descent\footnote{throughout the paper, this means \emph{cyclic and proximal} coordinate descent} \citep{Tseng01,Friedman_Hastie_Hofling_Tibshirani07} is the most popular in machine learning scenarios
\citep{Fan_Chang_Hsieh_Wang_Lin08,Friedman_Hastie_Tibshirani10,Richtarik_Takac14,Fercoq_Richtarik15,Perekrestenko_Cevher_Jaggi17,Karimireddy_Koloskova_Stich_Jaggi18}.
It consists in updating the vector of parameters one coefficient at a time,  looping over all the predictors until convergence.

Since only a fraction of the coefficients are non-zero in the optimal parameter vector, a recurring idea to speed up solvers is to limit the size of the optimization problem by ignoring features which are not included in the solution.
To do so, two approaches can be distinguished:
\begin{itemize}
     \item \emph{screening rules}, introduced by \citet{ElGhaoui_Viallon_Rabbani12} and later developed by  \cite{Ogawa_Suzuki_Takeuchi13,Wang_Wonka_Ye12,Xiang_Wang_Ramadge14,Bonnefoy_Emiya_Ralaivola_Gribonval14,Fercoq_Gramfort_Salmon15,Ndiaye_Fercoq_Gramfort_Salmon16,Ndiaye_Fercoq_Gramfort_Salmon16b}, progressively remove features from the problems in a backward approach,
     \item \emph{working sets} techniques \citep{Fan_Lv2008,Roth_Fischer08,Kowalski_Weiss_Gramfort_Anthoine11,Tibshirani_Bien_Friedman_Hastie_Simon_Tibshirani12,Johnson_Guestrin15} solve a sequence of smaller problems restricted to a growing number of features.
 \end{itemize}
One common idea between the current state-of-art methods for screening (Gap Safe rules \citealt{Fercoq_Gramfort_Salmon15,Ndiaye_Fercoq_Gramfort_Salmon16b}) and working sets (\blitz, \citealt{Johnson_Guestrin15,Johnson_Guestrin18}) is to rely heavily on a dual point to identify useful features.
The quality of such a dual point for the dual problem is critical here as it has a direct impact on performance.
However, although a lot of attention has been devoted to creating a sequence of primal iterates that converge fast to the optimum \citep{Fercoq_Richtarik15}, the construction of dual iterates has not been scrutinized, and the standard approach to obtain dual iterates from primal ones \citep{Mairal}, although converging, is crude.

In this paper, we propose a principled way to construct a sequence of dual points that converges faster than the standard approach proposed by \citet{Mairal}.
Based on an extrapolation procedure inspired by \citet{Scieur_Bach_Daspremont16}, it comes with no significant extra computational costs,
while retaining convergence guarantees of the standard approach.
This construction was first introduced for non-smooth optimization by \citet{Massias_Gramfort_Salmon18} for the Lasso case only, while we generalize it here to any Generalized Linear Model (GLM).
More precisely, we properly define, quantify and prove the asymptotic Vector AutoRegressive (VAR) behavior of dual iterates for sparse GLMs solved with proximal gradient descent or cyclic coordinate descent.
The resulting new construction:
\begin{itemize}
    \item provides a tighter control of optimality through duality gap evaluation,
    \item improves the performance of Gap safe rules,
    \item is easy to implement and combine with other solvers.
\end{itemize}

The article proceeds as follows.
We introduce the framework of $\ell_1$-regularized GLMs and duality in \Cref{sec:notation_and_framework}.
As a seminal example, we present our results on dual iterates regularity and dual extrapolation for the Lasso in \Cref{sec:lasso_case}.
We generalize it to a variety of problems in \Cref{sec:others_models,sec:screening_working_sets}.
Results of \Cref{sec:experiments} demonstrate a systematic improvement in computing time when dual extrapolation is used together with Gap Safe rules or working set policies.

\paragraph{Notation}
For any integer $d \in \bbN$, we denote by $[d]$ the set $\{1, \dots, d\}$.
The design matrix $X \in \bbR^{n \times p}$ is composed of observations $\mathbf{x}_i \in \bbR^p$ stored row-wise, and whose $j$-th column is $x_j \in \bbR^n$; the vector $y \in \bbR^n$ (resp. $\{-1, 1\}^n$) is the response vector for regression (resp. binary classification).
The support of $\beta \in \bbR^p$ is $\cS(\beta) = \condsetin{j\in [p]}{\beta_j \neq 0}$, of cardinality $\normin{\beta}_0$.
For $\cW \subset [p]$, $\beta_\cW$ and $X_\cW$ are $\beta$ and $X$ restricted to features in $\cW$.
As much as possible, exponents between parenthesis (\eg $\beta^{(t)}$) denote iterates and subscripts (\eg $\beta_j$) denote vector entries or matrix columns.
The sign function is $\sign: x \mapsto x / \absin{x}$ with the convention $0/0 = 0$.
The sigmoid function is $\sigma: x \mapsto 1 / (1 + e^{-x})$.
The soft-thresholding of $x$ at level $\nu$ is $\ST(x, \nu) = \sign(x) \cdot \max(0, \abs{x} - \nu)$.
Applied to vectors, $\sign$, $\sigma$ and $\ST(\cdot, \nu)$ (for $\nu \in \bbR_+$) act element-wise.
Element-wise product between vectors of same length is denoted by $\odot$.
The vector of size $n$ whose entries are all equal to 0 (resp. 1) is denoted by $\mathbf{0}_n$ (resp. $\mathbf{1}_n$).
On square matrices, $\normin{\cdot}_2$ is the spectral norm (and the standard Euclidean norm for vectors reads $\norm{\cdot}$); $\normin{\cdot}_1$ is the $\ell_1$-norm.
For a symmetric positive definite matrix $H$, $\langle x , y \rangle_H = x^\top H y$ is the $H$-weighted inner product, whose associated norm is denoted $\normin{\cdot}_H$.
We extend the small-$o$ notation to vector valued functions in the following way: for $f: \bbR^n \to \bbR^n$ and $g: \bbR^n \to \bbR^n$, $f = o(g)$ if and only if $\normin{f} = o(\normin{g})$, \ie $\normin{f} / \normin{g}$ tends to 0 when $\normin{g}$ tends to 0.
For a convex and proper function $f : \bbR^n \to \bbR \cup \{\infty\}$, its Fenchel-Legendre conjugate $f^*: \bbR^n \to \bbR \cup \{\infty\}$ is defined by $f^*(u) = \sup_{x\in \bbR^n} u^\top x - f(x)$,
and its subdifferential at $x \in \bbR^n$ is $\partial f(x) = \condsetin{u \in \bbR^n}{\forall y \in \bbR^n, f(y) \geq f(x) + u ^ \top (y - x)}$.
\section{GLMs, Vector AutoRegressive sequences and sign identification}
\label{sec:notation_and_framework}

We first introduce the class of optimization problems we consider.
\begin{definition}[Sparse Generalized Linear Model]
    We call \emph{Sparse Generalized Linear Model} the following optimization problem:
  \begin{problem}\label{eq:sparse_glm}
    \betaopt \in \argmin_{\beta \in \bbR^p} \underbrace{\sum_{i = 1}^n f_i( \beta^\top \mathbf{x}_i) + \lambda \normin{\beta}_1}_{\cP(\beta)}\enspace,
  \end{problem}
where all $f_i$ are convex\footnote{by that we mean close, convex and proper following the framework of \cite{Bauschke_Combettes11}.}, differentiable functions with $1/ \gamma$-Lipschitz gradients.
The parameter $\lambda$ is a non-negative scalar, controlling the trade-off between data fidelity and regularization.
\end{definition}

Two popular instances of \Cref{eq:sparse_glm} are the Lasso ($f_i(t) = \tfrac{1}{2}(y_i - t)^2$, $\gamma=1$) and Sparse Logistic regression
($f_i(t) = \log (1 + \exp(- y_i  t))$, $\gamma=4$); our naming is an abuse of language since for some choice of $f_i$'s, \eg an Huber loss, there is no underlying statistical GLM.

A more complex regularizer could be used in \Cref{eq:sparse_glm}, to handle group penalties for example.
For the sake of clarity we rather remain specific, and generalize to other penalties when needed in \Cref{sub:multitask_lasso}.

\begin{proposition}[Duality for sparse GLMs]\label{prop:dual_sparse_glm}
    A dual formulation of \Cref{eq:sparse_glm} reads:
    \begin{problem}\label{pb:dual_sparse_glm}
        \thetaopt = \argmax_{\theta \in \Delta_X} \underbrace{ \left(- \sum_{i=1}^{n} f_i^*(-\lambda \theta_i) \right)}_{\cD(\theta)} \enspace,
    \end{problem}
    where $\Delta_X = \condsetin{\theta \in \bbR^n}{\normin{X^\top \theta}_\infty \leq 1}$.
    The dual solution $\thetaopt$ is unique because the $f_i^*$'s are $\gamma$-strongly convex \citep[Thm 4.2.1]{Hiriart-Urruty_Lemarechal93b} and the KKT conditions read:
    \begin{align}\label{eq:KKT}
        \forall i \in [n],\quad \quad  & \, \thetaopt_i = -  f_i' (\betaopt^\top \mathbf{x}_i) / \lambda &\textbf{(link equation)}\\
        \forall j \in [p], \quad \quad & \, x_j^\top \thetaopt \in \partial \absin{\cdot} (\betaopt_j) &\textbf{(subdifferential inclusion)} \label{eq:subdiff_inclusion}
    \end{align}
    If for $u \in \bbR^n$ we write $F(u) \eqdef \sum_{i=1}^n f_i (u_i)$, the link equation reads $\thetaopt = - \nabla F(X\betaopt) / \lambda$.

    For any $(\beta, \theta) \in \bbR^p \times \Delta_X$, one has $\cD(\theta) \leq \cP(\beta)$, and $\cD(\thetaopt) = \cP(\betaopt)$.
    The duality gap $\cP(\beta) - \cD(\theta)$ can thus be used as an upper bound for the sub-optimality of a primal vector $\beta$: for any $\epsilon > 0$, any $\beta \in \bbR^p$, and any feasible  $\theta \in \Delta_X$:
    \begin{equation}\label{eq:stopping_criterion_gap}
        \cP(\beta) - \cD(\theta) \leq \epsilon \Rightarrow \cP(\beta) - \cP(\betaopt) \leq \epsilon\enspace.
    \end{equation}
    These results holds because Slater's condition is met: \Cref{eq:sparse_glm} is unconstrained and the objective function has domain $\bbR^p$ \cite[\S 5.2.3]{Boyd_Vandenberghe04}, therefore strong duality holds.
\end{proposition}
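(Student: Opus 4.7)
The plan is to derive this via Fenchel--Rockafellar duality, treating $X\beta$ as a slack variable. Introduce $z\in\bbR^n$ with the constraint $z=X\beta$, so that \Cref{eq:sparse_glm} becomes $\min_{\beta,z}\{F(z)+\lambda\normin{\beta}_1 : z=X\beta\}$. The Lagrangian with multiplier $\nu\in\bbR^n$ is
\begin{equation*}
L(\beta,z,\nu) = F(z) + \lambda\normin{\beta}_1 + \nu^\top(X\beta - z) \enspace.
\end{equation*}
Minimising over $z$ produces $-F^*(\nu)=-\sum_i f_i^*(\nu_i)$, while minimising over $\beta$ separates as $\min_\beta \lambda\normin{\beta}_1 + (X^\top\nu)^\top\beta$, which is either $0$ (when $\normin{X^\top\nu}_\infty\leq\lambda$) or $-\infty$. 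The rescaling $\nu=-\lambda\theta$ converts $\normin{X^\top\nu}_\infty\leq\lambda$ into $\theta\in\Delta_X$ and the objective into $-\sum_i f_i^*(-\lambda\theta_i)=\cD(\theta)$, giving \Cref{pb:dual_sparse_glm}.

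For uniqueness of $\thetaopt$, I would invoke the conjugate correspondence: since each $f_i$ has a $(1/\gamma)$-Lipschitz gradient, the cited Hiriart--Urruty/Lemar\'echal theorem makes $f_i^*$ $\gamma$-strongly convex; summation and the affine map $\theta\mapsto-\lambda\theta$ preserve strong convexity (up to the factor $\gamma\lambda^2$), so $\cD$ is strongly concave on $\Delta_X$ and attains its max at a unique point.

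The KKT conditions follow from first-order optimality of $L$. Stationarity in $z$ gives $\nabla F(z^\star)=\nu^\star$, which after the rescaling is the link equation $\thetaopt=-\nabla F(X\betaopt)/\lambda$. Stationarity in $\beta$ gives $-X^\top\nu^\star\in\partial(\lambda\normin{\cdot}_1)(\betaopt)$, i.e.\ $\lambda X^\top\thetaopt\in\lambda\,\partial\normin{\cdot}_1(\betaopt)$; dividing by $\lambda$ and reading coordinate-wise yields \Cref{eq:subdiff_inclusion}.

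Weak duality $\cD(\theta)\leq\cP(\beta)$ is the generic min--max inequality applied to $L$, and the stopping criterion \Cref{eq:stopping_criterion_gap} follows directly from $\cP(\beta)-\cP(\betaopt)=\cP(\beta)-\cD(\thetaopt)\leq\cP(\beta)-\cD(\theta)$ once strong duality is in hand. The only slightly delicate step is strong duality itself: the primal is unconstrained and $F+\lambda\normin{\cdot}_1$ is finite on all of $\bbR^p$, so the relative interior of $\mathrm{dom}\,\cP$ is non-empty and Slater's condition holds, closing the zero-gap at $(\betaopt,\thetaopt)$. I do not expect real obstacles here; the main care is bookkeeping the sign and scale through the change of variables $\nu=-\lambda\theta$ so that the link equation carries the correct minus sign.
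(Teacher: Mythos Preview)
Your derivation is correct and follows the same route the paper implicitly uses: Lagrangian/Fenchel duality with the slack $z=X\beta$, then Slater's condition for strong duality and the Hiriart--Urruty/Lemar\'echal smooth--strongly-convex correspondence for uniqueness. The paper does not actually spell out a separate proof---the justification is folded into the proposition statement (the final sentence invoking Slater and the citation for strong convexity of $f_i^*$)---so your write-up simply fills in the computations (the $\beta$-minimisation yielding the $\Delta_X$ constraint, the sign bookkeeping under $\nu=-\lambda\theta$, and the coordinate-wise reading of the subdifferential inclusion) that the paper leaves to the reader.
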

\begin{remark}\label{rem:dual_gap_stop_cond}
\Cref{eq:stopping_criterion_gap} shows that even though $\betaopt$ is unknown in practice and the sub-optimality gap cannot be evaluated, creating a dual feasible point $\theta \in \Delta_X$ allows to compute an upper bound which can be used as a tractable stopping criterion.
\end{remark}

In high dimension, solvers such as proximal gradient descent (PG) and coordinate descent (CD) are slowed down due to the large number of features.
However, by design of the $\ell_1$ penalty, $\betaopt$ is known to be sparse, especially for large values of $\lambda$.
Thus, a key idea to speed up these solvers is to identify the support of $\betaopt$ so that features outside of it can be safely ignored: this leads to a smaller problem that is faster to solve.
Removing features when it is guaranteed that they are not in the support of the solution is at the heart of the so-called \emph{Gap Safe Screening rules} \citep{Fercoq_Gramfort_Salmon15, Ndiaye_Fercoq_Gramfort_Salmon16b}.

\begin{proposition}[{Gap Safe Screening rule, \citep[Thm. 6]{Ndiaye_Fercoq_Gramfort_Salmon16b}}]\label{prop:gap_safe_rule}
    The Gap Safe screening rule for \Cref{eq:sparse_glm} reads:
    \begin{align}\label{eq:gap_safe_rule}
       \forall j \in [p], \forall \beta \in \bbR^p, \forall \theta \in \Delta_X, \, \frac{1 - \absin{x_j^\top \theta}}{\normin{x_j}} >\sqrt{\tfrac{2}{\gamma \lambda^2}(\cP(\beta) - \cD(\theta))} \implies \betaopt_j = 0 \enspace .
    \end{align}
\end{proposition}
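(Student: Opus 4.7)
The plan is to prove the contrapositive: assume the gap inequality holds and show that $|x_j^\top \hat\theta| < 1$, which via the subdifferential inclusion \eqref{eq:subdiff_inclusion} forces $\hat\beta_j = 0$ (if $\hat\beta_j \neq 0$ then $\partial|\cdot|(\hat\beta_j) = \{\sign(\hat\beta_j)\}$, so $|x_j^\top \hat\theta| = 1$). The workhorse is a \textbf{safe ball} argument: I will upper-bound $\|\theta - \hat\theta\|$ in terms of the duality gap and then control $|x_j^\top \hat\theta|$ by Cauchy--Schwarz.

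First, I will establish the strong concavity of $\cD$. Since each $f_i^*$ is $\gamma$-strongly convex (as recalled in \Cref{prop:dual_sparse_glm}), the map $\theta_i \mapsto f_i^*(-\lambda \theta_i)$ is $\gamma \lambda^2$-strongly convex by the chain rule applied to a linear rescaling; summing over $i$, the function $-\cD$ is $\gamma \lambda^2$-strongly convex on $\bbR^n$. Since $\hat\theta$ maximizes $\cD$ over the convex feasible set $\Delta_X$, the first-order optimality condition $\langle \nabla \cD(\hat\theta), \theta - \hat\theta \rangle \leq 0$ holds for any $\theta \in \Delta_X$. Combining this with the strong concavity inequality at $\hat\theta$ yields
\begin{equation*}
    \cD(\theta) \leq \cD(\hat\theta) - \tfrac{\gamma \lambda^2}{2} \|\theta - \hat\theta\|^2 .
\end{equation*}

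Next, I will combine this with weak duality $\cD(\hat\theta) = \cP(\hat\beta) \leq \cP(\beta)$ to obtain
\begin{equation*}
    \|\theta - \hat\theta\|^2 \leq \tfrac{2}{\gamma \lambda^2}\bigl( \cD(\hat\theta) - \cD(\theta) \bigr) \leq \tfrac{2}{\gamma \lambda^2}\bigl(\cP(\beta) - \cD(\theta)\bigr) ,
\end{equation*}
so $\hat\theta$ lies in the ball of center $\theta$ and radius $r \eqdef \sqrt{\tfrac{2}{\gamma \lambda^2}(\cP(\beta) - \cD(\theta))}$.

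Finally, I will bound $|x_j^\top \hat\theta|$ via the triangle inequality and Cauchy--Schwarz:
\begin{equation*}
    |x_j^\top \hat\theta| \leq |x_j^\top \theta| + |x_j^\top(\hat\theta - \theta)| \leq |x_j^\top \theta| + \|x_j\|\, r .
\end{equation*}
Under the hypothesis $r < (1 - |x_j^\top \theta|)/\|x_j\|$, this gives $|x_j^\top \hat\theta| < 1$, and the KKT conclusion follows. I expect no real obstacle: the only careful step is tracking the constant $\gamma \lambda^2$ in the strong concavity of $\cD$, and verifying that the constrained optimality condition together with strong concavity still yields the clean two-point inequality above (which is the standard strongly convex/constrained argument).
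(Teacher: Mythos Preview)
Your argument is correct and is precisely the standard Gap Safe derivation: strong concavity of $\cD$ (with the right constant $\gamma\lambda^2$) combined with constrained optimality yields the safe ball $\|\theta-\hat\theta\|\le r$, and then Cauchy--Schwarz plus the KKT subdifferential inclusion finishes. Note, however, that the paper does not supply its own proof of this proposition; it simply quotes the result from \citet[Thm.~6]{Ndiaye_Fercoq_Gramfort_Salmon16b}, and your sketch reproduces that cited argument. One tiny technical point: $\cD$ need not be differentiable (only strong concavity is guaranteed from $1/\gamma$-smoothness of the $f_i$), so the line ``$\langle \nabla \cD(\hat\theta),\theta-\hat\theta\rangle\le 0$'' should more carefully read ``there exists a supergradient $s\in\partial\cD(\hat\theta)$ with $\langle s,\theta-\hat\theta\rangle\le 0$'', but the two-point inequality you use survives unchanged.
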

Therefore, while running an iterative solver, the criterion~\eqref{eq:gap_safe_rule} can be tested periodically for all features $j$, and the features guaranteed to be inactive at optimum can be ignored.\footnote{\citet[Thm. 5.1]{Johnson_Guestrin18} improved the RHS in \eqref{eq:gap_safe_rule} by a factor $\sqrt{2}$. In our experiments, it did not lead to a noticeable speed-up as the bulk of the computation is spent on iterations before the screening rule discards variables, and the $\sqrt{2}$ factor is not large enough to make this happen much earlier}

\Cref{eq:stopping_criterion_gap,eq:gap_safe_rule} do not require a specific choice of $\theta$, provided it is in $\Delta_X$. It is up to the user and so far it has not attracted much attention in the literature.
Thanks to the link equation $\thetaopt = - \nabla F(X\betaopt) / \lambda$, a natural way to construct a dual feasible point $\theta^{(t)} \in \Delta_X$ at iteration $t$, when only a primal vector $\beta^{(t)}$ is available, is:
\begin{equation}\label{eq:theta_res}
    \thetaresiduals^{(t)} \eqdef - \nabla F(X\beta^{(t)}) / \max(\lambda, \normin{X^\top \nabla F(X\beta^{(t)})}_\infty)\enspace.
\end{equation}

This was coined \emph{residuals rescaling} \citep{Mairal} following the terminology used for the Lasso case where $- \nabla F(X\beta)$ is equal to the residuals, $ y - X \beta$.

To improve the control of sub-optimality and identification of useful features, the aim of our proposed \emph{dual extrapolation} is to obtain a better dual point (\ie closer to the optimum $\thetaopt$).
The idea is to do it at a low computational cost by exploiting the structure of the sequence of dual iterates $(X\beta^{(t)})_{t \in \bbN}$; we explain what is this ``structure'', and how to exploit it, in the following.
\begin{definition}[Vector AutoRegressive sequence]\label{def:var}
    We say that $(r^{(t)})_{t \in \bbN} \in (\bbR^n)^\bbN$ is a Vector AutoRegressive (VAR) sequence (of order 1)
    if there exists $A \in \bbR^{n \times n}$ and $b \in \bbR^n$ such that for $t \in \bbN$:
        \begin{equation}
            r^{(t + 1)} = A r^{(t)} + b \enspace.
        \end{equation}
        We also say that the sequence $(r^{(t)})_{t \in \bbN}$, converging to $\hat r$, is an asymptotic VAR sequence if there exist $A \in \bbR^{n \times n}$
    and $b \in \bbR^n$ such that for $t \in \bbN$:
        \begin{equation}
            r^{(t + 1)} - A r^{(t)} - b = o(r^{(t)} - \hat r) \enspace.
        \end{equation}
\end{definition}
\begin{proposition}[Extrapolation for VAR sequences {\citep[Thm 3.2.2]{Scieur}}]\label{prop:extrapolation_var}
     Let $(r^{(t)})_{t \in \bbN}$ be a VAR sequence in $\bbR^n$, satisfying $r^{(t + 1)} = A r^{(t)} + b$ with $A \in \bbR^{n \times n}$ a symmetric positive definite matrix such that $\normin{A}_2 < 1$, $b \in \bbR^n$ and $K < n$.
     Assume that for $t \geq K$, the family $\{ r^{(t  - K)} - r^{(t - K + 1)}, \ldots, r^{(t - 1)} - r^{(t)} \}$ is linearly independent and define
     \begin{align}\label{eq:r_extr_def-SAM}
         U^{(t)}
            &\eqdef [r^{(t  - K)} - r^{(t - K + 1)}, \ldots, r^{(t - 1)} - r^{(t)}] \in \bbR^{n \times K}\enspace, \\
         (c_1, \dots, c_{K})
            &\eqdef \frac{(U^{(t)}{}^\top U^{(t)}){}^{-1} \mathbf{1}_K}
                    {\mathbf{1}_K^\top (U^{(t)}{}^\top U^{(t)}){}^{-1} \mathbf{1}_K} \in \bbR^K \enspace, \\
         r_{\extr}
            &\eqdef \sum_{k = 1}^{K} c_k r^{(t - K - 1 + k)} \in \bbR^n \enspace.
     \end{align}
     Then, $r_{\extr}$ satisfies
     \begin{equation}\label{eq:r_extr_bound-SAM}
         \norm{A r_{\extr} - b - r_{\extr}} \leq \cO(\rho^K) \enspace,
     \end{equation}
     where $\rho = \frac{1 - \sqrt{1 - \normin{A}_2}}{1 + \sqrt{1 - \normin{A}_2}} < 1$.
\end{proposition}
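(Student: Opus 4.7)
The plan is to reduce the bound to a classical Chebyshev minmax estimate on symmetric matrices. First, introduce the fixed point $r^\star \eqdef (I-A)^{-1} b$, which exists since $\normin{A}_2 < 1$, and the error $e^{(s)} \eqdef r^{(s)} - r^\star$, which obeys the homogeneous recursion $e^{(s+1)} = A e^{(s)}$. Setting $v \eqdef e^{(t-K)}$, this gives $e^{(t-K-1+k)} = A^{k-1} v$ for $k = 1, \dots, K$, and the $k$-th column of $U^{(t)}$ equals $(I-A) e^{(t-K-1+k)} = (I-A) A^{k-1} v$. Since $\mathbf{1}_K^\top c = 1$ is immediate from the closed form of $c$, one has $r_{\extr} - r^\star = \sum_k c_k e^{(t-K-1+k)}$, so the residual of $r_{\extr}$ for the recursion satisfies
\[
    A r_{\extr} + b - r_{\extr} \;=\; -(I-A)(r_{\extr} - r^\star) \;=\; -U^{(t)} c,
\]
and it is enough to control $\norm{U^{(t)} c}$.

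Second, the coefficients $c$ provided in the statement are precisely the solution of $\min \norm{U^{(t)} c}^2$ subject to $\mathbf{1}_K^\top c = 1$, as one reads off from the first-order conditions with a single Lagrange multiplier. Hence $\norm{U^{(t)} c} \leq \norm{U^{(t)} \tilde c}$ for any feasible $\tilde c$. Substituting the VAR identity just derived gives $U^{(t)} \tilde c = (I - A) P(A) v$, where $P(x) \eqdef \sum_k \tilde c_k x^{k-1}$ ranges over polynomials of degree at most $K - 1$ with $P(1) = 1$. Since $A$ is symmetric positive definite with spectrum contained in $(0, \normin{A}_2]$, the spectral theorem, together with the trivial bound $0 \leq 1 - \lambda \leq 1$ on that interval, yields
\[
    \norm{U^{(t)} c} \;\leq\; \norm{v} \inf_{\substack{\deg P \leq K-1 \\ P(1) = 1}}\ \max_{\lambda \in [0, \normin{A}_2]} |P(\lambda)|,
\]
which is a scaled Chebyshev minmax problem on $[0, \normin{A}_2]$ with normalization at $\lambda = 1$.

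The last step is the only algebraically delicate one, and will be the main obstacle: I would identify the classical value $1 / T_{K-1}\bigl(\tfrac{2 - \normin{A}_2}{\normin{A}_2}\bigr)$ of this infimum with $\cO(\rho^K)$. Writing $\kappa = \normin{A}_2$ and $x = (2-\kappa)/\kappa > 1$, the standard lower bound $T_{K-1}(x) \geq \tfrac{1}{2}(x + \sqrt{x^2 - 1})^{K-1}$ applies, and the factorizations $2 - \kappa + 2\sqrt{1-\kappa} = (1 + \sqrt{1-\kappa})^2$ and $\kappa = (1-\sqrt{1-\kappa})(1+\sqrt{1-\kappa})$ let one verify $x + \sqrt{x^2-1} = 1/\rho$, yielding the advertised rate. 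Everything else is bookkeeping with the VAR recursion and with the Lagrangian characterization of the constrained least squares solution, both of which can be invoked essentially verbatim from \citet{Scieur}.
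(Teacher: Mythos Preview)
Your argument is correct and complete. The reduction to $\norm{U^{(t)} c}$ via the fixed-point error $e^{(s)} = A^{s-(t-K)} v$, the identification of $c$ as the constrained least-squares minimizer, the rewriting $U^{(t)}\tilde c = (I-A)P(A)v$ with $P(1)=1$, and the Chebyshev minmax computation yielding $1/T_{K-1}\!\bigl(\tfrac{2-\kappa}{\kappa}\bigr) = \cO(\rho^{K})$ are all sound; your algebraic identification $x+\sqrt{x^{2}-1}=1/\rho$ is exactly right. (Incidentally, you silently fixed the sign in the statement: the meaningful residual is $Ar_{\extr}+b-r_{\extr}$, not $Ar_{\extr}-b-r_{\extr}$.)

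The paper, however, does not prove the bound \eqref{eq:r_extr_bound-SAM} at all: it cites \citet[Thm~3.2.2]{Scieur} for the result and only supplies a \emph{motivational} argument for the form of $r_{\extr}$. That argument is of a different nature from yours: it invokes Cayley--Hamilton on $A$ to show that the exact limit $\hat r$ lies in $\Span(r^{(t-n)},\dots,r^{(t)})$, then argues that with $K<n$ iterates one should pick the affine combination whose fixed-point residual is smallest, which is the constrained least-squares problem with solution \eqref{eq:closed_form_ck}. This explains \emph{why} the extrapolation is designed as it is, but it never quantifies the residual nor produces the rate $\rho^{K}$. Your Chebyshev route is the standard way to obtain that rate (and is essentially what \citet{Scieur} does); it buys an actual proof where the paper only offers heuristics.
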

The justification for this extrapolation procedure is the following: since $\normin{A}_2 < 1$, $(r^{(t)})_{t \in \bbN}$ converges, say to $\hat r$.
For $t \in \bbN$, we have $r^{(t + 1)} - \hat r = A (r^{(t)} - \hat r)$.
Let $(a_0, \dots, a_n) \in \bbR^{n + 1}$ be the coefficients of $A$'s characteristic polynomial.
By Cayley-Hamilton's theorem, $\sum_{k=0}^n a_k A^k = 0$.
Given that $\normin{A}_2 < 1$, $1$ is not an eigenvalue of $A$ and $\sum_{k=0}^n  a_k \neq 0$, so we can normalize these coefficients to have $\sum_{k=0}^n  a_k = 1$.
For $t \geq n$, we have:
\begin{align}
    \sum_{k=0}^n a_k \left(r^{(t - n + k)} - \hat r \right) &= \left( \sum_{k=0}^n a_k A^k \right) (r^{(t - n)} - \hat r) = 0 \enspace,  \\
    \mathrm{and ~ so} \quad \quad  \sum_{k=0}^n a_k r^{(t - n + k)} &= \sum_{k=0}^n a_k \hat r = \hat r \enspace.
\end{align}
Hence, $\hat r \in \Span(r^{(t - n)}, \dots, r^{(t)})$.

Therefore, it is natural to seek to approximate $\hat r$ as an affine combination of the $(n + 1)$ last iterates $(r^{(t - n)}, \dots, r^{(t)})$.
Using $(n + 1)$ iterates might be costly for large $n$, so one might rather consider only a smaller number $K$, \ie find $(c_1, \dots, c_{K}) \in \bbR^{K}$ such that $\sum_{k=1}^{K} c_k r^{(t - K -1 + k)}$ approximates $\hat r$.
Since $\hat r$ is a fixed point of $r \mapsto Ar + b$, $\sum_{k=1}^{K} c_k r^{(t - K - 1 + k)}$ should be one too.
Under the normalizing condition $\sum_{k=1}^{K} c_k =1$, this means that
\begin{align}\label{eq:fixed_point_ck}
     \sum_{k=1}^{K} c_k r^{(t - K - 1 + k)} - A \sum_{k=1}^{K}  c_k r^{(t - K - 1 + k)} - b
     &= \sum_{k=1}^{K} c_k r^{(t - K - 1 + k)} - \sum_{k=1}^{K} c_k \left(r^{(t - K + k)} - b\right) - b %
     \nonumber\\
     &= \sum_{k=1}^{K} c_k \left(r^{(t - K - 1 + k)} - r^{(t - K + k )}\right) %
 \end{align}
should be as close to $\mathbf{0}_n$ as possible; this leads to solving:
\begin{equation}\label{eq:minimization_c}
  \hat{c} = \argmin_{\substack{c\in \bbR^{K}\\ c^\top \mathbf{1}_{K} = 1}}
            \norm{\sum_{k=1}^{K} c_k \left(r^{(t - K + k)} - r^{(t - K - 1 + k)}\right)} \enspace,
\end{equation}
which admits a closed-form solution if $U^{(t)} \eqdef [r^{(t - K + 1)} - r^{(t  - K)}  , \ldots, r^{(t)} - r^{(t - 1)}] \in \bbR^{n \times K}$ has full column rank \citep[Lemma 2.4]{Scieur_Bach_Daspremont16}:
\begin{equation}\label{eq:closed_form_ck}
    \hat{c} = \frac{(U^{(t)}{}^\top U^{(t)}){}^{-1} \mathbf{1}_K}
             {\mathbf{1}_K^\top (U^{(t)}{}^\top U^{(t)}){}^{-1} \mathbf{1}_K} \enspace.
\end{equation}
In practice, the next proposition shows that when $U^{(t)}$ does not have full column rank, it is theoretically sound to use a lower value for the number of extrapolation coefficients $K$.
\begin{proposition}
  \label{prop:U_invertible}
  If $U^{(t)}{}^\top U^{(t)}$ is not invertible, then $\hat r \in \Span(r^{(t  - 1)}, \ldots, r^{(t - K)})$.
\end{proposition}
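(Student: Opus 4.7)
The plan is to translate non-invertibility of $U^{(t)\top}U^{(t)}$ into a linear dependence among a Krylov-like sequence, apply the minimal polynomial of $A$ (restricted to the corresponding Krylov subspace), and convert the resulting relation on the centered iterates back into the required linear combination of the $r^{(s)}$.

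First, I observe that $U^{(t)\top}U^{(t)}$ fails to be invertible if and only if $U^{(t)}$ does not have full column rank, since $U^{(t)\top}U^{(t)}$ is the associated Gram matrix. Each column of $U^{(t)}$ is $r^{(s)}-r^{(s+1)} = (I-A)(r^{(s)}-\hat r)$. Because $\normin{A}_2 < 1$, the matrix $I-A$ is invertible, so $U^{(t)}$ has the same rank as the matrix $V^{(t)} \eqdef [\tilde r^{(t-K)}, \dots, \tilde r^{(t-1)}]$, where I write $\tilde r^{(s)} \eqdef r^{(s)}-\hat r$. The hypothesis therefore amounts to: the vectors $\tilde r^{(t-K)}, \dots, \tilde r^{(t-1)}$ are linearly dependent.

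Next, I use the VAR structure. Since $\tilde r^{(s+1)} = A \tilde r^{(s)}$, these centered iterates are precisely $\tilde r^{(t-K)}, A\tilde r^{(t-K)}, \dots, A^{K-1}\tilde r^{(t-K)}$, i.e., the first $K$ vectors of the Krylov sequence generated by $\tilde r^{(t-K)}$ under $A$. Their linear dependence means the minimal polynomial $q$ of $A$ associated with the vector $\tilde r^{(t-K)}$ has degree $r \leq K-1$. Writing $q(x) = \sum_{k=0}^r q_k x^k$, the identity $q(A)\tilde r^{(t-K)} = 0$ rewrites as $\sum_{k=0}^r q_k \tilde r^{(t-K+k)} = 0$, a relation that only involves vectors among $\tilde r^{(t-K)}, \dots, \tilde r^{(t-1)}$ since $r \leq K-1$.

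Finally, I translate back to the $r^{(s)}$. Substituting $\tilde r^{(s)} = r^{(s)} - \hat r$ gives $\sum_{k=0}^r q_k r^{(t-K+k)} = q(1)\, \hat r$. The key point — and the only delicate one — is that $q(1) \neq 0$: since $q$ divides the characteristic polynomial of $A$, its roots are eigenvalues of $A$, which all lie in the open unit disk because $\normin{A}_2 < 1$, so $1$ is not a root of $q$. Dividing by $q(1)$ exhibits $\hat r$ as an explicit linear combination of $r^{(t-K)}, \dots, r^{(t-K+r)}$, hence $\hat r \in \Span(r^{(t-K)}, \dots, r^{(t-1)})$, as claimed. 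The main obstacle to watch out for is precisely the step $q(1)\neq 0$; without the spectral assumption, a naive argument (e.g. picking any nonzero $d$ in $\ker U^{(t)}$ and hoping $\sum_k d_k \neq 0$) fails in the case $\sum_k d_k = 0$, which is why invoking the minimal polynomial — rather than an arbitrary dependence — is essential.
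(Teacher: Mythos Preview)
Your argument is correct, but the paper proceeds differently and more elementarily. From a nonzero $x \in \ker U^{(t)}$ with, say, $x_K \neq 0$, the paper telescopes $\sum_k x_k\bigl(r^{(t-K+k)} - r^{(t-K+k-1)}\bigr) = 0$ to write $r^{(t)}$ as an \emph{affine} combination of $r^{(t-K)}, \ldots, r^{(t-1)}$ (the coefficients $(x_k - x_{k-1})/x_K$ automatically sum to $1$). Since $r \mapsto Ar + b$ preserves affine combinations, every subsequent $r^{(s)}$ remains in $\Span(r^{(t-K)}, \ldots, r^{(t-1)})$, and passing to the limit gives $\hat r$ in that span. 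Your Krylov/minimal-polynomial route is more structural: it pinpoints the specific dependence coming from the minimal polynomial of $A$ relative to $\tilde r^{(t-K)}$, and the spectral assumption $\normin{A}_2 < 1$ does double duty (invertibility of $I-A$ and $q(1)\neq 0$), yielding $\hat r$ directly as a finite combination with no limit argument. The paper's approach, by contrast, never invokes $I-A$ or any polynomial machinery; the telescoping trick sidesteps the ``$\sum_k d_k = 0$'' obstruction you flag by a different mechanism --- it forces affinity regardless of which kernel vector is chosen --- and only needs convergence of the sequence at the very end.
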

\begin{proof}
  Let $x \in \bbR^K \setminus \{\mathbf{0}_K\}$ be such that $U^{(t)}{}^\top U^{(t)} x = \mathbf{0}_K$,
  with $ x_K \neq 0$ (the proof is similar if $x_K = 0, x_{K -1} \neq 0$, etc.).
  Then $ U^{(t)} x = \sum_{k=1}^K x_k (r^{(t - K + k)} - r^{(t  - K + k - 1)}) =  0$ and, setting $x_{0} \eqdef 0$, $ r^{(t)} = \tfrac{1}{x_K}  \sum_{k=1}^K (x_{k} - x_{k - 1}) r^{(t - K + k - 1)}  \in \Span(r^{(t  - 1)}, \ldots, r^{(t - K)})$.
  Since $ \tfrac{1}{x_K} \sum_{k=k}^K (x_{k} - x_{k - 1}) = 1$,
  it follows that
  \begin{align}
  r^{(t + 1)} & = A r^{(t)} + b \nonumber \\
              & = \tfrac{1}{x_K} \sum_{k=1}^K (x_{k} - x_{k - 1}) (A r^{(t - K + k - 1)} + b) \nonumber \\
              & = \tfrac{1}{x_K} \sum_{k=1}^K (x_{k} - x_{k + 1}) r^{(t - K + k)} \in \Span(r^{(t  - 1)}, \ldots, r^{(t - K)}) \enspace ,
  \end{align}
  and subsequently $r^{(s)} \in \Span(r^{(t  - 1)}, \ldots, r^{(t - K)})$ for all $s \geq t$.
  By going to the limit, $\hat r \in \Span(r^{(t  - 1)}, \ldots, r^{(t - K)})$.
\end{proof}

Finally, we state the results on sign identification, which implies support identification. For these results, which connect sparse GLMs to VAR sequences and extrapolation, we need to make the following assumption.

\begin{assumption}\label{assum:uniqueness}
  \Cref{eq:sparse_glm} is non degenerate: $- \nabla f(\hat\beta) / \lambda \in \relint \partial \normin{\cdot}_1$, where $\relint$ denotes the relative interior and $f(\beta) = F(X\beta)$.
\end{assumption}
This non-degeneracy condition is frequently used in works on support identification \citep{fuchs2004sparse,hare2007identifying,candes2013simple,vaiter2015model}.
Using it, we can extend results by \citet{Hale_Yin_Zhang2008} about sign identification from proximal gradient to coordinate descent.
\begin{theorem}[Sign identification for proximal gradient and coordinate descent]\label{thm:sign_id}
    Let \Cref{assum:uniqueness} hold.
    Let $(\beta^{(t)})_{t \in \bbN}$ be the sequence of iterates converging to $\betaopt$ and produced by proximal gradient descent or coordinate descent when solving \Cref{eq:sparse_glm} (reminded in lines \ref{algoline:update_pg} and \ref{algoline:update_cd} of
    \Cref{alg:CD_ISTA_dual_extrapolation}).

    There exists $T \in \bbN$ such that: $\forall j \in [p], t \geq T \implies \sign(\beta^{(t)}_j) = \sign(\betaopt_j)$.
    The smallest epoch $T$ for which this holds is when \emph{sign identification} is achieved.
\end{theorem}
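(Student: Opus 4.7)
The plan is to split the analysis by coordinate, separating $j \in \cS(\betaopt)$ and $j \notin \cS(\betaopt)$, and to exploit the soft-thresholding structure common to both PG and CD updates. The key ingredient making \Cref{assum:uniqueness} do work is \emph{strict dual feasibility}: $\absin{x_j^\top \thetaopt} < 1$ for every $j \notin \cS(\betaopt)$. This non-degeneracy condition, the one highlighted right after the assumption, follows from \citet{Tibshirani13}: if the strict inequality failed at some $j \notin \cS(\betaopt)$, one could move $\betaopt$ along $\pm e_j$ and compensate with coordinates of $\cS(\betaopt)$ along a null direction of $X$ restricted to the equicorrelation set, producing a second minimizer and contradicting uniqueness. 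The second ingredient is the convergence $\beta^{(t)} \to \betaopt$, which is classical for both PG and cyclic CD applied to \Cref{eq:sparse_glm}, together with $\nabla F(X \beta^{(t)}) \to \nabla F(X \betaopt) = -\lambda \thetaopt$ by continuity.

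For $j \in \cS(\betaopt)$ the conclusion is immediate: $\beta^{(t)}_j \to \betaopt_j \neq 0$ and continuity of $\sign$ on $\bbR \setminus \{0\}$ yield $\sign(\beta^{(t)}_j) = \sign(\betaopt_j)$ for $t$ larger than some coordinate-specific threshold $T_j$.

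The delicate case is $j \notin \cS(\betaopt)$, where I must show that $\beta^{(t)}_j$ becomes, and then stays, exactly $0$. Both the PG and the CD update on coordinate $j$ amount to applying $\ST(\cdot, \lambda / L_j)$, with $L_j$ the coordinate-wise Lipschitz constant, to an argument of the form $\beta^{(t)}_j - \tfrac{1}{L_j} x_j^\top \nabla F(X \tilde\beta^{(t)})$; here $\tilde\beta^{(t)}$ is $\beta^{(t)}$ itself for PG and the partially updated iterate at the moment $j$ is visited for CD. Strict dual feasibility gives $\delta > 0$ with $\absin{x_j^\top \thetaopt} \leq 1 - \delta$. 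Continuity of $\nabla F$, convergence of $\beta^{(t)}$, and the observation that within a single CD epoch $\tilde\beta^{(t)}$ differs from $\beta^{(t)}$ by a vanishing quantity, together imply $\absin{x_j^\top \nabla F(X \tilde\beta^{(t)})} \leq \lambda (1 - \delta/2)$ for $t$ large enough. Combining this with $\beta^{(t)}_j \to 0$, a triangle inequality shows the input to the soft-thresholding has modulus at most $\lambda / L_j$, so the output is $0$. Once $\beta^{(t)}_j$ has been zeroed, the same bound at every subsequent visit of coordinate $j$ keeps it at $0$. Taking $T$ to be the maximum of the finitely many per-coordinate waiting times concludes.

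The main technical obstacle, compared with the pure-PG argument of \citet{Hale_Yin_Zhang2008}, is the cyclic aspect of CD: the quantity driving the update on coordinate $j$ depends on $\tilde\beta^{(t)}$ rather than $\beta^{(t)}$. Controlling $\normin{X \tilde\beta^{(t)} - X \beta^{(t)}}$ via a one-epoch sufficient-decrease estimate and $\beta^{(t)} \to \betaopt$ is the step that requires the most care, and it is what promotes the PG-style proof into one that covers cyclic coordinate descent.
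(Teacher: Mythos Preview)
Your proposal is correct and shares the paper's high-level structure: the same case split on $j \in \cS(\betaopt)$ versus $j \notin \cS(\betaopt)$, the same reliance on strict dual feasibility (what the paper calls the \emph{saturation gap} $\hat\delta$) as the consequence of \Cref{assum:uniqueness}, and the same trivial argument for $j \in \cS(\betaopt)$ via convergence. The genuine difference lies in how the off-support coordinates are handled. You argue directly that, by continuity, the soft-thresholding input $\tilde\beta^{(s)}_{j} - \tfrac{1}{L_j} x_j^\top \nabla F(X\tilde\beta^{(s)})$ eventually lands in the dead zone $[-\lambda/L_j,\lambda/L_j]$ and stays there, so the output is identically zero from some epoch on. The paper instead invokes a quantitative soft-thresholding inequality due to \citet[Lemma~3.2]{Hale_Yin_Zhang2008}, namely $\ST(x,\nu)\neq 0,\ \ST(y,\nu)=0 \Rightarrow \absin{\ST(x,\nu)-\ST(y,\nu)} \leq \absin{x-y} - (\nu-\absin{y})$, and derives a contradiction by showing that each nonzero update at $j\notin E$ forces $\absin{\tilde\beta_j^{(s+1)}-\betaopt_j} \leq \absin{h_j(\tilde\beta^{(s)})-h_j(\betaopt)} - \hat\delta$, which cannot hold infinitely often. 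Your route is more self-contained (no external lemma) and makes the ``stays zero'' part explicit; the paper's route is more quantitative and, for proximal gradient, yields the explicit bound $T \leq \normin{\beta^{(0)}-\betaopt}^2/\hat\delta^2$ from \citet{Hale_Yin_Zhang2008}. Both proofs require, and neither fully proves, that the within-epoch CD iterates $\tilde\beta^{(s)}$ converge to $\betaopt$; you at least flag this via the sufficient-decrease remark, whereas the paper uses it implicitly when passing to the limit in the contradiction argument.
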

\begin{proof}
For lighter notation in this proof, we denote $l_j = \normin{x_j}^2 / \gamma$ and
$h_j(\beta) = \beta_j - \frac{1}{l_j} x_j^\top \nabla F (X\beta)$.
For $j \in [p]$, the subdifferential inclusion \eqref{eq:subdiff_inclusion} reads:
\begin{equation}\label{eq:optimality_subdiff}
  - \frac{x_j^\top \nabla F(X\betaopt)}{\lambda} \in
    \begin{cases}
    \{1\} \enspace , & \text{ if } \betaopt_j > 0 \enspace,\\
    \{-1\} \enspace, & \text{ if } \betaopt_j < 0 \enspace,\\
    [-1,1] \enspace, & \text{ if } \betaopt_j = 0 \enspace.
    \end{cases}
\end{equation}
Motivated by these conditions, the \emph{equicorrelation set} introduced by \citet{Tibshirani13} is:
\begin{equation}
  E \eqdef \condsetin{j \in [p]}{  |x_j^\top \nabla F(X\betaopt)| = \lambda} = \condsetin{j \in [p]}{  |x_j^\top \thetaopt| = 1} \enspace.
\end{equation}
We introduce the \emph{saturation gap} associated to \Cref{eq:sparse_glm}:
\begin{align}\label{eq:def_saturation_gap}
    \hat{\delta}
    & \eqdef
     \min \left\{ \frac{\lambda}{l_j} \left(1-\frac{| x_j^\top \nabla F(X\betaopt)|}{\lambda}\right): j \notin E \right\} =
     \min \left\{ \frac{\lambda}{l_j} \left(1-| x_j^\top \thetaopt|\right): j \notin E \right\} > 0
    \enspace.
\end{align}
As $\thetaopt$ is unique, $\hat \delta$ is well-defined, and strictly positive by definition of $E$.
By \Cref{eq:optimality_subdiff}, the support of any solution is included in the equicorrelation set.
By \Cref{assum:uniqueness}, we even have equality. %

We will now show that the coefficients outside the equicorrelation eventually vanish.
The proof requires to study the primal iterates after each update (instead of after each epoch), hence we use the notation $\tilde\beta^{(s)}$ for the primal iterate after the $s$-th update of coordinate descent.
This update only modifies the $j$-th coordinate, with ${s \equiv j - 1  \mod p}~$:
\begin{align}
  \tilde\beta^{(s + 1)}_j = \ST\left(h_j(\tilde\beta^{(s)}), \tfrac{\lambda}{l_j}\right) \enspace.
\end{align}
Note that at optimality, for every $j \in [p]$, one has: %
\begin{align}\label{eq:ST_optimality}
  \betaopt_j = \ST\left(h_j(\betaopt), \tfrac{\lambda}{l_j} \right) \enspace.
\end{align}
Let us consider an update $s \in \bbN$ of coordinate descent such that the updated coordinate $j$ verifies $\tilde\beta_j^{(s+1)}\neq 0$ and $j \notin E$, hence, $\betaopt_j = 0$.
Then:
\begin{align}\label{ineq:partial_identification}
        |\tilde\beta_j^{(s+1)}-\betaopt_j	|
    &=
        \left|\ST\left(h_j(\tilde\beta^{(s)}), \tfrac{\lambda}{l_j}\right)
              - \ST\left(h_j(\betaopt), \tfrac{\lambda}{l_j}\right) \right| \nonumber\\
    &\leq
        \left| h_j(\tilde\beta^{(s)}) - h_j(\betaopt) \right|
        - \left(\tfrac{\lambda}{l_j}
              -  \absin{ h_j(\betaopt) } \right)  \enspace,
\end{align}
where we used the following inequality \citep[Lemma 3.2]{Hale_Yin_Zhang2008}:
\begin{equation}
  \ST(x, \nu) \neq 0 , \ST(y,\nu) = 0 \implies
      |\ST(x,\nu) - \ST(y,\nu)|\leq|x-y| - (\nu -|y|) \enspace .
\end{equation}
Now notice that by definition of the saturation gap~\eqref{eq:def_saturation_gap}, and since $j \notin E~$:
\begin{align}
    & \frac{\lambda}{l_j} \left(1-\frac{|x_j^\top \nabla F(X\betaopt)|}{\lambda}\right) \geq \hat{\delta} \enspace \nonumber ,\\
    \mathrm{that \, is,} \quad
    & \frac{\lambda}{l_j} -  \absin{h_j(\betaopt)}  \geq \hat{\delta} \quad \text{ (using } \betaopt_j=0) \enspace . \label{ineq:partial_v2}
\end{align}
Combining \Cref{ineq:partial_identification,ineq:partial_v2} yields
\begin{align}\label{ineq:identification_gap_enters}
    |\tilde\beta_j^{(s+1)}-\betaopt_j	|
    &\leq
          \left| h_j(\tilde\beta^{(s)}) - h_j(\betaopt) \right| - \hat{\delta} \enspace.
\end{align}
This can only be true for a finite number of updates, since otherwise taking the limit would give $0 \leq - \hat \delta$, and $\hat \delta >0$ (Eq.~\eqref{eq:def_saturation_gap}).
Therefore, after a finite number of updates, $\tilde\beta_j^{(s)} = 0$ for $j \notin E$.

For $j \in E$, $\betaopt_j \neq 0$ by \Cref{assum:uniqueness}, so $\beta^{(t)}_j$ has the same sign eventually since it converges to $\betaopt_j$.

The proof for proximal gradient descent is a result  of \citet[Theorem 4.5]{Hale_Yin_Zhang2008}, who provide the bound $T \leq \normin{\tilde\beta^{(s)} - \betaopt}_2^2/\hat{\delta}^2$.
 \end{proof}
\section{A seminal example: the Lasso case}
\label{sec:lasso_case}

{\fontsize{4}{4}\selectfont
\begin{algorithm}[t]
\SetAlgoLined
\SetKwInOut{Input}{input}
\SetKwInOut{Init}{init}
\SetKwInOut{Parameter}{param}
\caption{\textsc{PG/cyclic CD for \Cref{eq:sparse_glm} with dual extrapolation}}
\label{alg:CD_ISTA_dual_extrapolation}
\Input{$X =[x_1 |\dots | x_p], y, \lambda, \beta^{(0)}, \epsilon$}
\Parameter{$T, K=5, \freq=10$}
\Init{$X\beta = X\beta^{(0)}, \theta^{(0)} = - \nabla F(X\beta^{(0)})/\max(\lambda, \normin{X^\top \nabla F(X\beta^{(0)})}_\infty)$}%

\For{$t = 1, \ldots, T$}
    {
        \If(\tcp*[h]{compute $\theta$ and gap every $f$ epoch only}){$t = 0 \mod \freq$
            }
            {$t' = t/ \freq $ \tcp*[l]{dual point indexing}

            $r^{(t')} = X\beta$

                compute $\thetaresiduals^{(t')}$ and $\thetaccel^{(t')}$ with \cref{eq:theta_res,eq:R_accel,eq:theta_accel}

                    {

                    $\theta^{(t')} =  \argmax \condset{\cD(\theta)}{\theta \in \{\theta^{(t'-1)}, \thetaccel^{(t')}, \thetaresiduals^{(t')}\}}$ \tcp*[l]{robust dual extr. with \eqref{eq:robust_extrapolation}}
                    }

                \lIf{$\cP(\beta^{(t)}) - \cD(\theta^{(t')})< \epsilon$}
                    {break}
            }
        \If(\tcp*[h]{proximal gradient descent:}){\upshape PG}{

            $X\beta = X \beta^{(t)}$

            $\beta^{(t + 1)} = \ST\Big(\beta^{(t)} - \frac{\gamma}{\normin{X^\top X}_2} X^\top \nabla F(X \beta), \frac{\lambda \gamma }{\normin{X^\top X}_2}\Big)$\label{algoline:update_pg}
            }
        \ElseIf(\tcp*[h]{cyclic coordinate descent:}){\upshape CD}{
            \For{$j = 1, \ldots, p$}
                {

                $\beta^{(t + 1)}_j = \ST\Big(\beta^{(t)}_j - \tfrac{\gamma x_j^\top \nabla F(X\beta)}{\normin{x_j}^2}), \tfrac{\gamma \lambda}{\normin{x_j}^2}\Big)$\label{algoline:update_cd}

                        $X\beta \pluseq ( \beta^{(t + 1)}_j - \beta^{(t)}_j) x_j$

                }
            }
    }
\Return{$\beta^{(t)}$, $\theta^{(t')}$}
\end{algorithm}
}

Dual extrapolation was originally proposed for the Lasso in the \celer algorithm \citep{Massias_Gramfort_Salmon18}.
As the VAR model holds exactly in this case, we first devote special attention to it.
We will make use of asymptotic VAR models and generalize \celer to all sparse GLMs in \Cref{sec:others_models}.

Using the identification property of coordinate descent and proximal gradient descent, we can formalize the VAR behavior of dual iterates.

\begin{proposition}\label{thm:var_CD_PG_lasso}
 When $(\beta^{(t)})_{t \in \bbN}$ is obtained by cyclic coordinate descent or proximal gradient descent applied to the Lasso problem, $(X\beta^{(t)})_{t \in \bbN}$ is a VAR sequence after sign identification.
\end{proposition}

\begin{proof}
Let us first recall that the strong convexity constant $\gamma$ is equal to 1 in the Lasso case.
Let $t \in \bbN$ denote an epoch after sign identification.
The respective updates of proximal gradient descent and coordinate descent are reminded in lines \ref{algoline:update_pg} and \ref{algoline:update_cd} of
\Cref{alg:CD_ISTA_dual_extrapolation}.
\noindent
\paragraph{\emph{Coordinate descent:}} Let $j_1, \dots, j_S$ be the indices of the support of $\betaopt$, in increasing order. As the sign is identified, coefficients outside the support are 0 and remain 0.
We decompose the $t$-th epoch of coordinate descent into individual coordinate updates: l\textsl{}et $\tilde \beta^{(0)} \in \bbR^p$ denote the initialization (\ie the beginning of the epoch, $\tilde \beta^{(0)} = \beta^{(t)}$), $\tilde \beta^{(1)} \in \bbR^p$ the iterate after coordinate $j_1$ has been updated, etc., up to $\tilde \beta^{(S)}$ after coordinate $j_S$ has been updated, \ie at the end of the epoch ($\tilde \beta^{(S)} = \beta^{(t+1)}$).

Let $s \in [S]$, then $\tilde \beta^{(s)}$ and $\tilde \beta^{(s-1)}$ are  equal everywhere, except at coordinate $j_s$:
  \begin{align}\label{eq:beta_js_CD}
    \tilde \beta^{(s)}_{j_s}
      &= \ST\left(\tilde \beta^{(s - 1)}_{j_s} + \frac{1}{\normin{x_{j_s}}^2} x_{j_s}^\top\left(y - X\tilde\beta^{(s - 1)}\right),
                                   \frac{\lambda}{\normin{x_{j_s}}^2}\right) %
         \nonumber\\
         &= \tilde \beta^{(s - 1)}_{j_s} + \frac{1}{\normin{x_{j_s}}^2} x_{j_s}^\top\left(y - X\tilde\beta^{(s - 1)}\right)
            - \frac{\lambda \sign(\betaopt_{j_s})}{\normin{x_{j_s}}^2} \enspace,
  \end{align}
  where we have used sign identification: $ \sign(\tilde\beta^{(s)}_{j_s}) =  \sign(\betaopt_{j_s})$. Therefore
 \begin{align}
    X\tilde \beta^{(s)} - X\tilde \beta^{(s-1)} &= x_{j_s}\left(\tilde \beta^{(s)}_{j_s} - \tilde \beta^{(s-1)}_{j_s} \right)
    \nonumber \\
    &=  x_{j_s} \left( \frac{x_{j_s}^\top (y - X\tilde \beta^{(s-1)}) - \lambda \sign(\betaopt_{j_s})}{\normin{x_{j_s}}^2}  \right)  %
          \nonumber \\
    &=  \frac{1}{\normin{x_{j_s}}^2}  x_{j_s} x_{j_s}^\top \left(y - X\tilde \beta^{(s-1)}\right) - \frac{\lambda \sign(\betaopt_{j_s})}{\normin{x_{j_s}}^2}  x_{j_s} \enspace.
  \end{align}
  This leads to the following linear recurrent equation:
  \begin{align}\label{eq:CD_is_var_proof}
    X\tilde \beta^{(s)}
     &=  \underbrace{\left(\Id_n - \frac{1}{\normin{x_{j_s}}^2}  x_{j_s} x_{j_s}^\top\right)}_{A_{s} \in \bbR^{n \times n}}
                    X\tilde \beta^{(s-1)} +
                      \underbrace{\frac{x_{j_s}^\top y - \lambda \sign(\betaopt_{j_s})}{\normin{x_{j_s}}^2}  x_{j_s}}_{b_{s} \in \bbR^n} \enspace.
  \end{align}
  Hence, one gets recursively
  \begin{align}
    X\tilde \beta^{(S)} &= A_S X\tilde\beta^{(S - 1)} + b_S %
    \nonumber \\
      &= A_{S} A_{S-1} X\tilde \beta^{(S-2)} + A_S b_{S-1} + b_S %
      \nonumber \\
      &= \underbrace{A_S \dots A_1 }_{A}
          X\tilde \beta^{(0)}
          + \underbrace{A_S \dots A_2 b_1 + \dots + A_S b_{S-1} + b_S}_{b} \enspace.
          \label{eq:VAR_cd_lasso_A1AS}
  \end{align}
  We can thus write the following VAR equations for $X\beta$ at the end of each epoch coordinate descent:
  \begin{align}
      X \beta^{(t+1)} &= A X\beta^{(t)} + b \enspace,
      \label{eq:VAR_cd_lasso} \\
      X \beta^{(t+1)} - X\betaopt &= A (X\beta^{(t)} - X  \betaopt)\enspace. %
  \end{align}
  \paragraph{\emph{Proximal gradient:}}  Let $\beta^{(t)}_\cS$, $\betaopt_\cS$ and $X_\cS$ denote respectively $\beta^{(t)}$, $\betaopt$ and $X$ restricted to features in the support $\cS(\betaopt)$.
  Notice that since we are in the identified sign regime, $X\beta^{(t)} = X_\cS \beta_\cS^{(t)}~$.
  With $L = \normin{X^\top X}_2~$, a proximal gradient descent update reads:
  \begin{align}
      \beta_\cS^{(t+ 1)}
        &= \ST\left(\beta_\cS^{(t)} - \tfrac{1}{L} X_\cS^\top(X_\cS \beta_\cS^{(t)} -y), \tfrac{\lambda}{L}\right) %
        \nonumber \\
        &=\beta_\cS^{(t)} - \tfrac{1}{L} X_\cS^\top \left(X_\cS \beta_\cS^{(t)} -y \right) -  \tfrac{\lambda}{L} \sign (\betaopt_\cS) %
            \nonumber \\
        &= \left(\Id_S  - \tfrac{1}{L} X_\cS^\top X_\cS\right) \beta_\cS^{(t)} + \tfrac{1}{L} X_\cS^\top y  - \tfrac{\lambda}{L} \sign (\betaopt_\cS) \enspace.
  \end{align}
  Hence the equivalent of \Cref{eq:VAR_cd_lasso} for proximal gradient descent is:
  \begin{equation}\label{eq:VAR_cd_ISTA}
    X \beta^{(t + 1)} = \left(\Id_n  - \tfrac{1}{L} X_\cS X_\cS^\top\right) X \beta^{(t)}
                      + \tfrac{1}{L} X_\cS X_\cS^\top y
                      - \tfrac{\lambda}{L} X_\cS \sign (\betaopt_\cS) \enspace.
  \end{equation}
\end{proof}

\begin{figure}
  \centering
  \includegraphics[width=7cm]{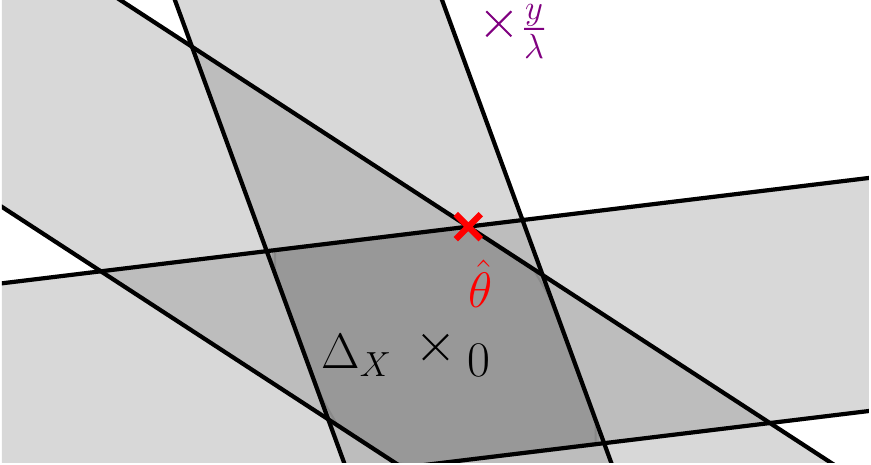}
  \includegraphics[width=7cm]{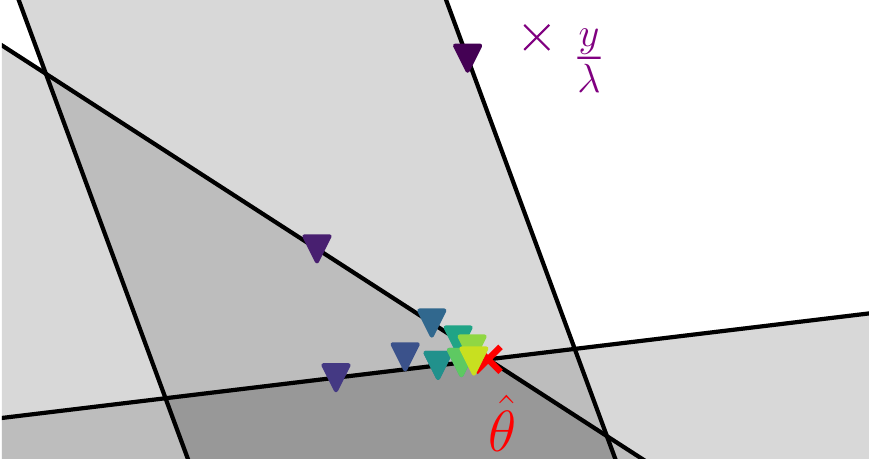}
  \caption{Illustration of the VAR nature of the dual iterates of the Lasso, on a toy dataset with $n=2$ and $p=3$. Left: dual of the Lasso problem; the dual optimum $\thetaopt$ is the projection of $y / \lambda$ onto $\Delta_X$. Right: sequence of residuals after each update of coordinate descent (first iterates in blue, last in yellow). After four updates, the iterates alternate geometrically between the same two constraint hyperplanes.}
  \label{fig:illustration_dual}
\end{figure}

\Cref{fig:illustration_dual} represents the Lasso dual for a toy problem and illustrates the VAR nature of $r^{(t)} / \lambda$.
As highlighted in \citet{Tibshirani17}, the iterates $r^{(t)} / \lambda$ correspond to the iterates of Dykstra's algorithm to project $y / \lambda$ onto $\Delta_X$.
During the first updates, the dual iterates do not have a regular trajectory.
However, after a certain number of updates (corresponding to sign identification), they alternate in a geometric fashion between two hyperplanes.
In this regime, it becomes beneficial to use extrapolation to obtain a point closer to $\thetaopt$.
\begin{remark}\label{rem:cyclic_vs_random_cd}
  \Cref{eq:VAR_cd_lasso_A1AS} shows why we combine extrapolation with \emph{cyclic} coordinate descent: if the coefficients are not always updated in the same order (see \citealt[Figure 1(c-d)]{Massias_Gramfort_Salmon18}), the matrix $A$ depends on the epoch, and the VAR structure may no longer hold.
\end{remark}
Having highlighted the VAR behavior of $(X\beta^{(t)})_{t \in \bbN}$, we can introduce our proposed dual extrapolation.
\begin{definition}[Extrapolated dual point for the Lasso]\label{def:extrapolated_dual_point_Lasso}
  For a fixed number $K$ of proximal gradient descent or coordinate descent epochs, let $r^{(t)}$ denote the residuals $y - X\beta^{(t)}$ at epoch $t$ of the algorithm. We define the extrapolated residuals
  \begin{equation}\label{eq:R_accel}
       r_{\mathrm{acc}}^{(t)} = \begin{cases}
                          r^{(t)}, &\mbox{if } t \leq K \enspace,\\
                          \displaystyle\sum_{k=1}^K c_k r^{(t + 1 - k)},    &\mbox{if } t > K \enspace.
                      \end{cases}
  \end{equation}
  where $c=(c_1,\dots,c_K)^\top \in \bbR^K$ is defined as in~\eqref{eq:closed_form_ck} with $U^{(t)} = [r^{(t + 1 - K)} - r^{(t  - K)}  , \ldots, r^{(t)} - r^{(t - 1)}] \in \bbR^{n \times K}$.
  Then, we define the extrapolated dual point as:
  \begin{equation}\label{eq:theta_accel}
      \thetaccel^{(t)} \eqdef r_{\mathrm{acc}}^{(t)} / \max(\lambda, \normin{X^\top r_{\mathrm{acc}}^{(t)}}_\infty)\enspace.
  \end{equation}
\end{definition}
In practice, we use $K = 5$ and do not compute $\thetaccel^{(t)}$ if $U^{(t)}{}^\top U^{(t)}$ cannot be inverted.
Additionally, to impose monotonicity of the dual objective, and guarantee a behavior at least as good at $\thetaresiduals$, we use as dual point at iteration $t$:
\begin{equation}\label{eq:robust_extrapolation}
  \theta^{(t)} = \displaystyle\argmax_{\theta \in \{\theta^{(t-1)}, \thetaccel^{(t)}, \thetaresiduals^{(t)}\}}\cD(\theta)\enspace.
\end{equation}

There are two reasons why the results of \Cref{prop:extrapolation_var} cannot be straightforwardly applied to \Cref{eq:theta_accel}:
  \begin{enumerate}
    \item the analysis by \citet{Scieur_Bach_Daspremont16} requires $A$ to be symmetrical, which is the case for proximal gradient descent but not for cyclic coordinate descent
    (as $\Id_n - x_{j_s} x_{j_s}^\top / \normin{x_{j_s}}^2$ and $\Id_n - x_{j_{s'}} x_{j_{s'}}^\top / \normin{x_{j_{s'}}}^2$ only commute if $x_{j_s}$ and $x_{j_{s'}}$ are collinear).
    To circumvent this issue, we can make $A$ symmetrical:
    instead of considering cyclic updates, we could consider that iterates $\beta^{(t)}$ are produced by a cyclic pass over the coordinates, \emph{followed by a cyclic pass over the coordinates in reverse order}.
    The matrix of the VAR in this case is no longer $A=A_S \dots A_1$, but $A_1\dots A_S A_S \dots A_1=A_1^{\top}\dots A_S^{\top} A_S \dots A_1=A^\top A$ (the $A_s$'s are symmetrical).
    Experiments of \Cref{sec:experiments}, where a simple cyclic order is used, tend to indicate that there is in fact no need for $A$ to be symmetrical.

    \item for both proximal gradient and coordinate descent we have $\norm{A} = 1$ instead of $\norm{A} < 1$ as soon as $S < n$: if the support of $\betaopt$ is of size smaller than $n$ ($S < n$), 1 is an eigenvalue of $A$.
    Indeed, for coordinate descent, if $S < n$, there exists a vector $u \in \bbR^n$, orthogonal to the $S$ vectors $x_{j_1}, \dots, x_{j_S}$.
    The matrix $A_s = \Id_n - \frac{1}{\normin{x_{j_s}}^2}  x_{j_s} x_{j_s}^\top$ being the orthogonal projection onto $\Span(x_{j_s})^{\bot}$, we therefore have $A_s u = u$ for every $s \in [S]$, hence $Au = u$.
    For proximal gradient descent, $\tfrac{1}{L}\tilde{X_S} \tilde{X_S}^\top$ is not invertible when $S < n$, hence 1 is an eigenvalue of $\Id_n - \tfrac{1}{L} \tilde{X_S} \tilde{X_S}^\top$.
    This seems to contradict the convergence of the VAR sequence but is addressed in \Cref{prop:modulus_one,prop:eigenspace_orthogonal}.
  \end{enumerate}

\begin{lemma}\label{prop:modulus_one}
  For coordinate descent, if an eigenvalue of $A = A_S \dots A_1$ has modulus 1, it is equal to 1.
\end{lemma}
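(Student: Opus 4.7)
The plan is to exploit the geometric nature of each factor: each $A_s = \Id_n - x_{j_s} x_{j_s}^\top / \|x_{j_s}\|^2$ is the orthogonal projection onto the hyperplane $\Span(x_{j_s})^\perp$. In particular, each $A_s$ is real symmetric, idempotent, and non-expansive, with $\|A_s\|_2 = 1$ and, crucially, $\|A_s w\| = \|w\|$ if and only if $A_s w = w$ (i.e.\ iff $x_{j_s}^\top w = 0$). This characterization of the equality case is what makes the argument work.

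First, I would take an eigenvalue $\mu \in \mathbb{C}$ of $A$ with $|\mu| = 1$ and a (possibly complex) eigenvector $v \neq 0$, so that $Av = \mu v$. Extending each $A_s$ to $\mathbb{C}^n$ the identity $\|A_s w\|^2 = w^* A_s^* A_s w = w^* A_s w = \|w\|^2 - |x_{j_s}^* w|^2 / \|x_{j_s}\|^2$ still holds, so the same equality characterization applies.

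Next, I would set $v_0 = v$ and $v_s = A_s v_{s-1}$ for $s = 1, \ldots, S$, so that $v_S = Av = \mu v$. The chain of inequalities
\begin{equation}
\|v\| = |\mu|\,\|v\| = \|v_S\| \leq \|v_{S-1}\| \leq \cdots \leq \|v_1\| \leq \|v_0\| = \|v\|
\end{equation}
forces every intermediate norm inequality to be an equality, hence $A_s v_{s-1} = v_{s-1}$ for every $s$. Telescoping gives $Av = v$, so $\mu v = v$, and since $v \neq 0$, we conclude $\mu = 1$.

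The argument is short and I do not anticipate a real obstacle; the only subtlety is handling the case of a complex eigenvalue (since $A$ is real but generally not symmetric), which is harmless because each individual $A_s$ is Hermitian and its equality-case characterization carries over verbatim to $\mathbb{C}^n$.
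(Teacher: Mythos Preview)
Your argument is correct and is essentially the same as the paper's: both use that each $A_s$ is an orthogonal projection, hence non-expansive with equality case $\|A_s w\|=\|w\|\iff A_s w=w$, and then force equality through the chain $\|v\|=\|Av\|\le\|A_{S-1}\cdots A_1 v\|\le\cdots\le\|v\|$ to conclude $A_s$ fixes each intermediate vector and thus $\mu=1$. Your handling of the complex eigenvector is in fact slightly more careful than the paper's, which writes $(\mu,x)\in\bbC\times\bbR^n$ without further comment.
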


\begin{proof}
  The matrix $A_s = \Id_n - \frac{1}{\normin{x_{j_s}}^2}  x_{j_s} x_{j_s}^\top$ is the orthogonal projection onto $\Span(x_{j_s})^{\bot}$.
  Hence,
  \begin{equation}\label{eq:Aj_orthogonal_projection}
      \forall x \in \bbR^n, \norm{A_s x} = \norm{x}  \implies A_s x = x\enspace.
  \end{equation}
    Let $(\mu, x) \in \bbC \times \bbR^n$ \st $\abs{\mu} = 1$, $\norm{x} = 1$ and $A x = \mu x$.
    This means $\norm{Ax} = 1$.
    Because $\norm{A_1 x} < 1 \implies
    \norm{A_S  \dots A_1 x} \leq \norm{A_S  \dots A_2} \normin{A_1 x} < 1
 \implies \norm{Ax} < 1$, we must have $\norm{A_1 x} \geq 1$.
    Since it holds that $\norm{A_1 x} \leq \norm{x} = 1$, we have $\norm{A_1 x} = \norm{x}$, thus $A_1 x = x$ because $A_1$ is an orthogonal projection.
    By a similar reasoning, $A_2 x = x$, etc. up to $A_S x = x$, hence $Ax = x$ and $\mu = 1$.
\end{proof}

\begin{lemma}\label{prop:eigenspace_orthogonal}
  For coordinate descent (resp. proximal gradient descent) applied to solve the Lasso, the VAR parameters $A \in \bbR^{n \times n}$ and $b\in \bbR^n$ defined in \eqref{eq:VAR_cd_lasso_A1AS} (resp.~\eqref{eq:VAR_cd_ISTA})  satisfy $b \in \Ker(\Id_n - A)^\perp$.
\end{lemma}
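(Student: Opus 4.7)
The plan is to identify $\Ker(\Id_n - A)$ in both cases, observe that its orthogonal complement coincides with $\Range(X_\cS)$, and then check that $b$ lies in $\Range(X_\cS)$.

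\textbf{Proximal gradient case.} Here things are immediate: $\Id_n - A = \tfrac{1}{L} X_\cS X_\cS^\top$, so $\Ker(\Id_n - A) = \Ker(X_\cS^\top) = \Range(X_\cS)^\perp$, whence $\Ker(\Id_n - A)^\perp = \Range(X_\cS)$. Both terms $\tfrac{1}{L} X_\cS X_\cS^\top y$ and $\tfrac{\lambda}{L} X_\cS \sign(\betaopt_\cS)$ in the expression of $b$ are explicitly of the form $X_\cS(\cdot)$, so $b \in \Range(X_\cS)$.

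\textbf{Coordinate descent case.} First I would reuse the argument from the proof of \Cref{prop:modulus_one}: $A_s$ is the orthogonal projection onto $\Span(x_{j_s})^\perp$, hence $A_s x = x$ iff $x \perp x_{j_s}$, and $Ax = x$ forces $A_1 x = \ldots = A_S x = x$, i.e.\ $x \in \Span(x_{j_1}, \ldots, x_{j_S})^\perp = \Range(X_\cS)^\perp$. The reverse inclusion is trivial. Therefore $\Ker(\Id_n - A)^\perp = \Range(X_\cS)$, exactly as in the PG case.

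\textbf{Membership of $b$ in $\Range(X_\cS)$ (the main point for CD).} Expanding $b = \sum_{s=1}^S A_S \cdots A_{s+1} b_s$, each $b_s$ is a scalar multiple of $x_{j_s}$, hence belongs to $\Range(X_\cS)$. The key observation is that $\Range(X_\cS)$ is \emph{invariant} under every $A_k$: for $x \in \Range(X_\cS)$, $A_k x = x - \tfrac{x_{j_k}^\top x}{\normin{x_{j_k}}^2} x_{j_k}$ is a linear combination of $x$ and $x_{j_k}$, both of which lie in $\Range(X_\cS)$. Applying this invariance inductively to each summand $A_S \cdots A_{s+1} b_s$ shows $b \in \Range(X_\cS) = \Ker(\Id_n - A)^\perp$, concluding the proof.

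The only non-routine step is the stability of $\Range(X_\cS)$ under the projectors $A_k$, but this is immediate once one writes out the rank-one update formula. Everything else follows from the preceding lemma and the explicit formulas \eqref{eq:VAR_cd_lasso_A1AS} and \eqref{eq:VAR_cd_ISTA}.
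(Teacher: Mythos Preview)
Your proof is correct and close in spirit to the paper's, but there is a small genuine difference in how the coordinate descent case is handled. Both arguments start by invoking \Cref{prop:modulus_one} to identify $\Ker(\Id_n - A)$ with $\Range(X_\cS)^\perp$, and both decompose $b = \sum_s A_S\cdots A_{s+1} b_s$ with $b_s$ collinear to $x_{j_s}$. The paper then takes $v \in \Ker(\Id_n - A)$ and uses the self-adjointness of the projectors to transfer them across the inner product: $v^\top A_S\cdots A_{s+1} b_s = (A_{s+1}\cdots A_S v)^\top b_s = v^\top b_s = 0$. You instead argue that $\Range(X_\cS)$ is $A_k$-invariant, so each summand remains in $\Range(X_\cS)$. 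These are dual formulations of the same fact (for an orthogonal projection, invariance of a subspace is equivalent to invariance of its complement), so neither approach buys anything the other does not; your phrasing is perhaps slightly more geometric, the paper's slightly more computational. The proximal gradient case is identical in both.
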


\begin{proof}
  \paragraph{\emph{Coordinate descent case:}} Let us remind that $b=A_S \dots A_2 b_1 + \dots + A_S b_{S-1} + b_S$ in this case, with $b_s = x_{j_s}^\top y - \lambda \sign(\betaopt_{j_s})  x_{j_s}/\normin{x_{j_s}}^2$.
  Let $v \in \Ker(\Id_n - A)$. Following the proof of \Cref{prop:modulus_one}, we have $A_1 v = \dots = A_S v = v$.
  For $s \in [S]$, since $A_s$ is the projection on $\Span(x_{j_s})^\bot$, this means that $v$ is orthogonal to $x_{j_s}$.
  Additionally, $v^\top A_S \dots A_{s + 1} b_s = (A_{s + 1} \dots A_S v)^\top b_s = v^\top b_s = 0$ since $b_s$ is co-linear to $x_{j_s}$.
  Thus, $v$ is orthogonal to the $S$ terms which compose $b$, and $b \perp \Ker(\Id_n - A)$.

  \medskip
  \noindent
  \paragraph{\emph{Proximal gradient descent case:}}
  Let $v \in \Ker(\Id_n - A) = \Ker(X_\cS X_\cS^\top)$.
  We have $v^\top X_\cS X_\cS^\top v = 0 = \norm{X_\cS^\top v}^2$, hence $X_\cS^\top v = 0.$
  It is now clear that $v^\top b  = v^\top(- X_\cS X_\cS^\top y + \lambda X_\cS \sign \hat \beta)/L = 0$, hence
  $b \perp \Ker(\Id_n - A)$.
\end{proof}

\begin{proposition}
  \Cref{prop:extrapolation_var} holds for the residuals $r^{(t)}$ (produced either by proximal gradient descent or coordinate descent) even though $\normin{A}_2 =1$ in both cases.
\end{proposition}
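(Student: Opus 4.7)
The plan is to decompose the dynamics of $r^{(t)}$ along two $A$-invariant orthogonal subspaces, on one of which $A$ acts as the identity and on the other of which the restriction of $A$ is a strict contraction; \Cref{prop:extrapolation_var} then applies to the contractive component, and this transfers back to $r^{(t)}$ by affine invariance of the extrapolation weights.

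First I would set $V_1 \eqdef \Ker(\Id_n - A)$ and $V_2 \eqdef V_1^\perp$. Rerunning the argument of \Cref{prop:modulus_one}, in both the PG and CD cases $Av = v$ forces $A_s v = v$ (equivalently $v \perp x_{j_s}$) for every $s$, hence $V_1 = \Range(X_\cS)^\perp$ and $V_2 = \Range(X_\cS)$. The subspace $V_2$ is $A$-invariant: for PG the matrix $A = \Id_n - X_\cS X_\cS^\top / L$ is symmetric, and for CD each factor $A_s = \Id_n - x_{j_s} x_{j_s}^\top / \normin{x_{j_s}}^2$ maps $\Range(X_\cS)$ into itself, so the product $A = A_S \cdots A_1$ does as well. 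I therefore write $r^{(t)} = r_1^{(t)} + r_2^{(t)}$ with $r_i^{(t)} \in V_i$.

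Next, \Cref{prop:eigenspace_orthogonal} gives $b \in V_2$, so projecting the recursion onto $V_1$ yields $r_1^{(t+1)} = r_1^{(t)} \eqdef \bar r_1$, that is, the $V_1$-component is frozen after sign identification. Projecting onto $V_2$ produces a reduced VAR $r_2^{(t+1)} = \tilde A r_2^{(t)} + b$ with $\tilde A \eqdef A|_{V_2}$. Because $V_1 \cap V_2 = \{0\}$, the eigenvalue $1$ of $A$ does not appear for $\tilde A$; combined with \Cref{prop:modulus_one} this gives $\rho(\tilde A) < 1$, and in the PG case symmetry upgrades this to $\normin{\tilde A}_2 < 1$.

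The final step is to pass the extrapolation from $r^{(t)}$ to $r_2^{(t)}$. Consecutive differences satisfy $r^{(t)} - r^{(t-1)} = r_2^{(t)} - r_2^{(t-1)} \in V_2$ since $r_1^{(t)}$ is constant, so the matrix $U^{(t)}$ and the coefficients $c_k$ computed from $(r^{(t)})$ coincide with those computed from $(r_2^{(t)})$. Using $\sum_k c_k = 1$ I get $r_\extr = \bar r_1 + r_{2,\extr}$, and the identity $A \bar r_1 = \bar r_1$ yields $A r_\extr - r_\extr - b = \tilde A r_{2,\extr} - r_{2,\extr} - b$. Applying \Cref{prop:extrapolation_var} to the contractive sequence $(r_2^{(t)})$ living on $V_2$ then gives the desired $\cO(\rho^K)$ bound.

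The main obstacle is the CD case, where $\tilde A$ is in general not symmetric, so \Cref{prop:extrapolation_var} as stated does not directly apply to $r_2^{(t)}$. This is resolved either by symmetrizing the sweep (a cyclic pass followed by its reverse, which replaces $A$ by $A_1^\top \cdots A_S^\top A_S \cdots A_1$ as noted after \Cref{def:extrapolated_dual_point_Lasso}) or by invoking the nonsymmetric extension of \citet{Bollapragada} mentioned in the same discussion; every other step is a routine invariant-subspace computation.
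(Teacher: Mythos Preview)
Your proof is correct and follows essentially the same route as the paper's: decompose along $V_1=\Ker(\Id_n-A)$ and $V_2=V_1^\perp$, observe the sequence is constant on $V_1$ and a strict-contraction VAR on $V_2$, then invoke \Cref{prop:extrapolation_var} on the $V_2$-component (with the CD symmetry caveat handled exactly as the paper does). Your explicit identification $V_2=\Range(X_\cS)$, the verification that $V_2$ is $A$-invariant, and the argument that the extrapolation weights computed from $(r^{(t)})$ coincide with those from $(r_2^{(t)})$ actually fill in details the paper leaves implicit; note also that for CD one in fact gets $\normin{\tilde A}_2<1$ (not just $\rho(\tilde A)<1$) by the same projection argument used in \Cref{prop:modulus_one}.
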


\begin{proof}
Let us write $A = \bar{A} + \underline{A}$ with $\bar{A}$ the orthogonal projection on $\Ker(\Id_n - A)$.
By \Cref{prop:modulus_one}, $\normin{\underline{A}} < 1$.

Then, one can check that $A \underline{A}  = \underline{A}^2$ and $A \bar{A}  = \bar{A}^2 = \bar{A}$ and $Ab = \underline{A} b$.

Let $T$ be the epoch when support identification is achieved.
For $t \geq T$, we have
\begin{equation}
  r^{(t + 1)} = \underline{A} r^{(t)} + b + \bar{A}r^{(T)} \enspace .
\end{equation}
Indeed, it is trivially true for $t = T$ and if it holds for $t$,
\begin{align}
  r^{(t + 2)} &= A r^{(t + 1)} +   b \nonumber \\
  &= A (\underline{A} r^{(t)} + b + \bar{A}r^{(T)}) + b  \nonumber\\
  &= \underline{A}^2 r^{(t)} + \underline{A} b + \bar{A}r^{(T)} + b  \nonumber\\
  &= \underline{A}(\underline{A} r^{(t)} + b) + \bar{A}r^{(T)} + b \nonumber \\
  &= \underline{A} r^{(t + 1)} + \bar{A}r^{(T)} + b  \enspace.
\end{align}
Therefore, on the space $\Ker(\Id_n - A)$, the sequence $r^{(t)}$ is constant, and on its orthogonal $\Ker(\Id_n - A)^{\perp}$, it is a VAR sequence with associated matrix $\underline{A}$, whose spectral normal is strictly less than 1.
Therefore, the results of \Cref{prop:extrapolation_var} still hold.
\end{proof}

\begin{remark}[{Connection with primal-dual techniques}]

The goal of our construction is to improve convergence for the primal, by constructing a better dual certificate which provides a tighter stopping criterion.
In our scheme, the primal iterates directly influence the dual ones  -- either through the link equation (residuals rescaling), either through extrapolation -- but (apart from the influence of screening or working set selection), the primal iterates do not depend on the dual ones.
An alternative technique to improve convergence in the dual would be to solve simultaneously the primal and the dual.
The objective function in \Cref{eq:sparse_glm} is $F(X\beta) + \lambda \normin{\beta}_1$, hence since strong duality holds, an equivalent saddle point formulation is
\begin{problem}
    \begin{aligned}
        &\max_{\theta \in \bbR^n} \min_{\beta \in \bbR^p, z \in \bbR^n} F(z) + \lambda \normin{\beta}_1 + \lambda\theta^\top(z- X\beta) \enspace, \quad \text{\ie} \\
    &\max_{\theta \in \bbR^n} \min_{\beta \in \bbR^p} \underbrace{- F^*(- \lambda \theta) + \lambda \normin{\beta}_1 - \lambda\theta^\top X\beta}_{\cL(\beta, \theta)} \enspace.
    \end{aligned}
\end{problem}
To solve this problem, the primal-dual Arrow-Hurwicz \citep{Arrow_Hurwicz_Uzawa58} method alternates proximal maximization steps in $\theta$ and proximal minimization steps in $\beta$.
Here, the maximization step can even be performed exactly, yielding:
\begin{align}\label{eq:primal_dual}
\begin{cases}
    \beta^{(t + 1)} = \prox_{\lambda/L \normin{\cdot}_1} (\beta^{(t)} + \frac{\lambda}{L} X^\top \theta^{(t)}) \enspace, \\
    \lambda \nabla F^* (- \lambda \theta^{(t + 1)}) - \lambda X\beta^{(t + 1)} = 0 \enspace,
\end{cases}
\end{align}
and the last line is equivalent to $\theta^{(t + 1)} = - \nabla F (X\beta^{(t + 1)}) / \lambda$ \citep[Cor. 1.4.4]{Hiriart-Urruty_Lemarechal93b}, as in \Cref{eq:KKT}.
Using inertial variants of the scheme \eqref{eq:primal_dual}, such as the one by \citet{Chambolle_Pock11} is a potential lead, which we do not investigate further.
In our opinion, a more promising direction of research would be to design extrapolation methods for the primal-dual coordinate descent method of \citet{Fercoq_Bianchi15}, which is left to future work.
Finally, we are not aware of algorithms working directly in the dual; a reason for that is that getting feasible iterates by other means than rescaling requires the knowledge of the projection onto $\Delta_X$, which is as difficult as the primal (see \cite{Tibshirani17} on this matter).
\cite{Dunner_Forte_Takac_Jaggi16} use a so-called \emph{``Lipschitzing trick''} to make the dual unconstrained, but the rough bound $\lambda \normin{\hat \beta}_1 \leq F(\mathbf{0}_n)$ they used is likely to lead to poor values of convergence rate constants in practice.

\end{remark}

Although so far we have proven results for both coordinate descent and proximal gradient descent for the sake of generality, we observed that coordinate descent consistently converges faster.
Hence from now on, we only consider the latter.
\section{Generalized linear models}
\label{sec:others_models}
\subsection{Coordinate descent for $\ell_1$ regularization}
\label{sub:fixed_step_size}
\begin{proposition}[VAR for coordinate descent and Sparse GLM]
  \label{thm:var_cd_sparse_glm}
  When \Cref{eq:sparse_glm} is solved by cyclic coordinate descent, the dual iterates $(X\beta^{(t)})_{t \in \bbN}$ form an asymptotical VAR sequence.
\end{proposition}

\begin{proof}
  As in the proof of \Cref{thm:var_CD_PG_lasso}, we place ourselves in the identified sign regime, and consider only one epoch $t$ of CD: let $\tilde \beta^{(0)}$ denote the value of the primal iterate at the beginning of the epoch ($\tilde \beta^{(0)} = \beta^{(t)}$), and for $s \in [S]$, $\tilde \beta^{(s)} \in \bbR^p$ denotes its value after the $j_s$ coordinate has been updated ($\tilde \beta^{(S)} = \beta^{(t + 1)}$).
  Recall that in the framework of \Cref{eq:sparse_glm}, the data-fitting functions $f_i$ have $1/\gamma$-Lipschitz gradients, and ${\nabla F(u) = (f_1'(u_1), \dots, f_n'(u_n))}$.

For $s \in [S]$, $\tilde \beta^{(s)}$ and $\tilde \beta^{(s - 1)}$ are equal everywhere except at entry $j_s$, for which the coordinate descent update with fixed step size $\tfrac{\gamma}{\normin{x_{j_s}}^2}$ is
  \begin{align}\label{eq:cd_update_general}
    \tilde\beta^{(s)}_{j_s}
        &=  \ST\Big( \tilde\beta^{(s - 1)}_{j_s} - \tfrac{\gamma}{\normin{x_{j_s}}^2} x_{j_s}^\top \nabla F(X \tilde\beta^{(s - 1)}), \tfrac{\gamma}{\normin{x_{j_s}}^2} \lambda\Big) %
        \nonumber \\
        &= \tilde\beta^{(s - 1)}_{j_s} - \tfrac{\gamma}{\normin{x_{j_s}}^2} x_{j_s}^\top \nabla F(X \tilde\beta^{(s - 1)})
              - \tfrac{\gamma }{\normin{x_{j_s}}^2} \lambda \sign(\betaopt_{j_s}) \enspace.
  \end{align}
  Therefore,
    \begin{align}\label{eq:cd_general_fixed}
    X\tilde \beta^{(s)} - X\tilde \beta^{(s-1)}
      &=  x_{j_s} \left(\tilde\beta^{(s)}_{j_s} - \tilde\beta^{(s - 1)}_{j_s} \right) %
      \nonumber \\
     &=  x_{j_s}\left(- \tfrac{\gamma}{\normin{x_{j_s}}^2} x_{j_s}^\top \nabla F(X\tilde \beta^{(s - 1)} )
                - \tfrac{\gamma}{\normin{x_{j_s}}^2} \lambda \sign(\betaopt_{j_s})  \right)  \enspace.
    \end{align}
    Using point-wise linearization of the function $\nabla F$  around $X \betaopt$, we have:
  \begin{align}\label{eq:linearization_general}
    \nabla F(X \beta) &= \nabla F(X \betaopt) + D (X\beta - X\betaopt) + o(X\beta - X\betaopt) \enspace,
  \end{align}
  where $D \eqdef \diag(f_1''(\betaopt^\top \mathbf{x}_1), \dots, f_n''(\betaopt^\top \mathbf{x}_n)) \in \bbR^{n \times n}$. Therefore
    \begin{align}
    X\tilde \beta^{(s)} &= %
              \left(\Id_n - \tfrac{\gamma}{\normin{x_{j_s}}^2} x_{j_s} x_{j_s}^\top D \right)
              X\tilde \beta^{(s-1)}
               \nonumber\\
              &\phantom{=} +
                    \tfrac{\gamma}{\normin{x_{j_s}}^2} \left(x_{j_s}^\top (D X\betaopt - \nabla F(X\betaopt)) - \lambda \sign( \betaopt_{j_s})
                    \right) x_{j_s}   + o(X \tilde \beta^{(s)} - X \betaopt)\enspace, \nonumber\\
    D^{1 / 2}  X\tilde \beta^{(s)} &=
    \underbrace{\left(\Id_n- \tfrac{\gamma}{\normin{x_{j_s}}^2} D ^{1 / 2} x_{j_s} x_{j_s}^\top D ^{1 / 2}\right)}_{A_s} D ^{1 / 2}  X\tilde \beta^{(s-1)} \nonumber
    \\ &\phantom{= \quad} +
          {\underbrace{ \tfrac{\gamma}{\normin{x_{j_s}}^2}  x_{j_s}^\top (D X\betaopt)
                     D ^{1 / 2} x_{j_s}}_{b_s}}
                + o(X \tilde \beta^{(s)}  - X \betaopt)\enspace ,
  \end{align}
  since the subdifferential inclusion \eqref{eq:subdiff_inclusion} gives
  $ - x_{j_s}^\top \nabla F(X\betaopt) - \lambda \sign (\betaopt_{j_s}) = 0$.
  Thus, the sequence $(D ^{1 / 2} X \beta^{(t)})_{t \in \bbN}$ is an asymptotical VAR sequence:
  \begin{equation}
    D^{1 / 2} X \beta^{(t + 1)} = A_S \dots A_1 D^{1 / 2} X \beta^{(t)} + b_S + \ldots + A_S \ldots A_2 b_1 + o(X\beta^{(t)} - X\betaopt) \enspace,
  \end{equation}
   and so is $(X \beta^{(t)})_{t \in \bbN}$:
  \begin{align}\label{eq:var_glm}
     X \beta^{(t + 1)} = \underbrace{D ^{-\tfrac{1}{2}} A_S \dots A_1 D ^{\tfrac{1}{2}}}_{A} X \beta^{(t)}
                      + \underbrace{D^{-\tfrac{1}{2}} (b_S + \ldots + A_S \ldots A_2 b_1)}_{b} + o(X\beta^{(t)} - X\betaopt)\enspace.
  \end{align}
\end{proof}
\begin{proposition}\label{prop:eigenspace_orthogonal_glm}
  As in \Cref{prop:modulus_one,prop:eigenspace_orthogonal}, for the VAR parameters $A$ and $b$ defined in~\Cref{eq:var_glm}, 1 is the only eigenvalue of $A$ whose modulus is 1 and $b \perp \Ker(\Id_n - A)$.
\end{proposition}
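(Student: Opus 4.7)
The plan is to reduce both assertions to the symmetric factor $\tilde A \eqdef A_S\cdots A_1$ appearing in the decomposition $A = D^{-1/2}\tilde A D^{1/2}$ and then transport the conclusions across the similarity. Each $A_s = \Id_n - \frac{\gamma}{\|x_{j_s}\|^2} D^{1/2} x_{j_s} x_{j_s}^\top D^{1/2}$ is symmetric, and its spectrum lies in $[0,1]$ because the $1/\gamma$-Lipschitzness of $\nabla f_i$ gives $D \preceq \gamma^{-1}\Id_n$, so $\gamma\, x_{j_s}^\top D x_{j_s} \leq \|x_{j_s}\|^2$. Thus the geometry on the transformed iterates $D^{1/2}X\beta^{(t)}$ is structurally identical to the Lasso geometry of \Cref{prop:eigenspace_orthogonal}, with $x_{j_s}$ replaced by $D^{1/2} x_{j_s}$.

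For the eigenvalue claim, similarity implies that $A$ and $\tilde A$ share the same spectrum, so it suffices to replay the proof of \Cref{prop:modulus_one} on $\tilde A$. If $\tilde A w = \mu w$ with $|\mu| = 1$, unrolling $\|A_S\cdots A_1 w\| = \|w\|$ together with $\|A_s\|_2 \leq 1$ forces equality $\|A_s y\| = \|y\|$ at each intermediate stage $y = A_{s-1}\cdots A_1 w$; for a symmetric operator with spectrum in $[0,1]$ this saturation implies $A_s y = y$, i.e.\ $y \in \Span(D^{1/2} x_{j_s})^\perp$. Chaining inductively gives $A_1 w = \cdots = A_S w = w$, hence $\tilde A w = w$ and $\mu = 1$.

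For the orthogonality claim my first step is to prove the symmetric analogue $\tilde b \perp \Ker(\Id_n - \tilde A)$ by copying \Cref{prop:eigenspace_orthogonal}: for $w \in \Ker(\Id_n - \tilde A)$ the previous paragraph gives $A_s w = w$ for every $s$, so $w \perp D^{1/2} x_{j_s}$; since each $b_s$ is collinear with $D^{1/2} x_{j_s}$ and the $A_\tau$'s are symmetric and fix $w$, the telescoping computation $w^\top A_S\cdots A_{s+1} b_s = (A_{s+1}\cdots A_S w)^\top b_s = w^\top b_s = 0$ vanishes termwise, so $w^\top \tilde b = 0$. Equivalently, $\tilde b$ lies in the image of $\Id_n - \tilde A$, consistent with the asymptotic fixed-point identity $\tilde b = (\Id_n - \tilde A)\, D^{1/2} X\betaopt$.

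Transporting this Euclidean orthogonality back through the similarity to obtain the stated $b \perp \Ker(\Id_n - A)$ is the step I anticipate as the main obstacle. Since $b = D^{-1/2}\tilde b$ while $\Ker(\Id_n - A) = D^{-1/2}\Ker(\Id_n - \tilde A)$, a naive change of variables only produces the $D$-weighted identity $v^\top D b = 0$, not the stated $v^\top b = 0$. To close this gap I plan to combine the limit identity $b = (\Id_n - A) X\betaopt$, which places $b$ in the image of $\Id_n - A$ and hence in $\Ker(\Id_n - A^\top)^\perp$, with the factor-wise symmetry of the $A_s$'s, which yields $\Ker(\Id_n - \tilde A) = \Ker(\Id_n - \tilde A^\top)$ (the fixed-point argument being insensitive to the order of the product). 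The careful final verification will amount to checking whether an additional cancellation between the two occurrences of $D^{\pm 1/2}$ upgrades the orthogonality from $\Ker(\Id_n - A^\top)$ to $\Ker(\Id_n - A)$, or whether the statement must be read in the $D$-weighted inner product that the similarity naturally induces.
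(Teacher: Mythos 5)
Your treatment of the eigenvalue claim is correct and coincides with the paper's: symmetry of each $A_s$, the bound $0 \preceq A_s \preceq \Id_n$ obtained from $D \preceq \gamma^{-1}\Id_n$, the norm-saturation argument forcing $A_s y = y$ stage by stage, and transport of the spectrum through the similarity $A = D^{-1/2} A_S \cdots A_1 D^{1/2}$ are exactly the steps used in the paper's proof of \Cref{prop:eigenspace_orthogonal_glm}.

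For the orthogonality claim, the obstacle you flag at the end is genuine, and you should be aware that the paper's own proof does not actually overcome it. The paper takes $v$ with $Av = v$, correctly deduces that $D^{1/2}v$ is fixed by every $A_s$ and hence $(D^{1/2}v)^\top D^{1/2}x_{j_s} = 0$, but then writes $v^\top D^{-1/2}A_S\cdots A_{s+1}b_s = (A_{s+1}\cdots A_S D^{-1/2}v)^\top b_s = (D^{-1/2}v)^\top b_s = 0$. Both of the last two equalities fail in general: it is $D^{1/2}v$, not $D^{-1/2}v$, that is fixed by the $A_\tau$'s, and since $b_s \propto D^{1/2}x_{j_s}$ one has $(D^{-1/2}v)^\top b_s \propto v^\top x_{j_s}$, whereas the quantity known to vanish is $v^\top D x_{j_s}$. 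What the argument really establishes is precisely what you derive: with $\tilde A \eqdef A_S\cdots A_1$ and $\tilde b \eqdef b_S + \cdots + A_S\cdots A_2 b_1$, one gets $\tilde b \perp \Ker(\Id_n - \tilde A)$, which translates into $v^\top D b = 0$ for $v \in \Ker(\Id_n - A)$, \ie $\langle v, b\rangle_D = 0$, equivalently $b \perp \Ker(\Id_n - A^\top)$. Your alternative route via $b = (\Id_n - A)X\betaopt$ lands at the same place ($b \in \Ker(\Id_n - A^\top)^\perp$) and cannot be upgraded to Euclidean orthogonality against $\Ker(\Id_n - A)$: already for $S = 1$ one has $b \propto x_{j_1}$ and $\Ker(\Id_n - A) = \{v \in \bbR^n : v^\top D x_{j_1} = 0\}$, so $b \perp \Ker(\Id_n - A)$ would force $x_{j_1}$ to be an eigenvector of $D$. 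The statement should therefore be read in the $D$-weighted inner product, or equivalently stated for the transformed sequence $(D^{1/2}X\beta^{(t)})_{t}$, which is the one with a symmetric transition matrix and the one to which the subsequent decomposition argument (splitting along $\Ker(\Id_n - \tilde A)$ and its orthogonal complement) properly applies; with that reading your proof is complete and is the corrected version of the paper's.
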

\begin{proof}
  First, notice that as in the Lasso case, we have $\Id_n \succeq A_s \succeq 0$.
  Indeed, because $f''_i$ takes values in $]0, 1/ \gamma[$, $D^{1/2}$ exists and  $\tfrac{1}{\sqrt{\gamma}}\Id_n \succeq D^{1/2} \succeq 0$.
  For any $u \in \bbR^n$,
  \begin{align}
    u^\top D^{1/2}   x_{j_s} x_{j_s}^\top D^{1/2}u &= (x_{j_s}^\top D^{1/2} u)^2 \geq 0 ,\\
     \mbox{and } \quad \quad \quad \quad  x_{j_s}^\top D^{1/2} u &\leq \normin{x_{j_s}} \normin{D^{1/2} u}  \nonumber  \\
      &\leq  \normin{x_{j_s}} \normin{D^{1/2}} \normin{u} \nonumber \\
      &\leq  \tfrac{1}{\sqrt{\gamma}} \normin{x_{j_s}} \normin{u} \enspace ,
  \end{align}
  thus $\tfrac{\normin{x_{j_s}}^2}{\gamma}  \Id_n \succeq D^{1/2}   x_{j_s} x_{j_s}^\top D^{1/2} \succeq 0$
  and $\Id_n \succeq A_s \succeq 0$.

  However, contrary to the Lasso case, because $\norm{D^{1/2}x_{j_s}} \neq \sqrt{\gamma}\norm{x_{j_s}}$, $A_s$ is not the orthogonal projection on $(\Span D^{1/2}x_{j_s})^\perp$.
  Nevertheless, we still have $A_s = A_s^\top$, $\normin{A_s} \leq 1$, and for $v \in \bbR^n$, $A_s v = v$ means that $v^\top D^{1/2} x_{j_s} = 0$, so the proof of \Cref{prop:modulus_one} can be applied to show that the only eigenvalue of $A_S \ldots A_1$ which has modulus 1 is 1.
  Then, observing that $A = D ^{- 1 / 2} A_S \dots A_1 D ^{1 / 2}$ has the same spectrum as $A_S \dots A_1$ concludes the first part of the proof.

  For the second result, let $v \in \Ker (\Id_n - A)$, \ie  $Av = v$, hence $A_S \ldots A_1  D^{1/2} v =  D^{1/2}Av = D^{1/2} v$.
  Therefore $D^{1/2} v$ is a fixed point of $A_S \ldots A_1$, and as in the Lasso case this means that for all $s \in [S]$, $A_s  D^{1/2} v =  D^{1/2} v$ and $ (D^{1/2} v)^\top D^{1/2} x_{j_s} = 0$.
  Now recall that
  \begin{align}
      b &= D^{-1/2} (b_S + \ldots + A_S \ldots A_2 b_1) \enspace, \\
      b_s &= \tfrac{\gamma}{\normin{x_{j_s}}^2} \left(x_{j_s}^\top (D X\betaopt - \nabla F(X\betaopt)) -         \lambda \sign (\betaopt_{j_s})
                    \right) D ^{1 / 2} x_{j_s} \nonumber \\
            &=  \tfrac{\gamma}{\normin{x_{j_s}}^2} ( x_{j_s}^\top D X\betaopt)
             D ^{1 / 2} x_{j_s} \enspace.
  \end{align}
  Additionally, $v^\top D^{-1/2}A_S \dots A_{s + 1} b_s  =  (A_{s + 1} \dots A_S D^{-1/2}v)^\top b_s = (D^{-1/2} v)^\top b_s = 0$.
  Hence $v$ is orthogonal to all the terms which compose $b$, hence $v^\top b = 0$.
\end{proof}
\Cref{thm:var_cd_sparse_glm} and \Cref{prop:eigenspace_orthogonal_glm} show that we can construct an extrapolated dual point for any sparse GLM, by extrapolating the sequence $(r^{(t)} = X\beta^{(t)})_{t \in \bbN}$ with the construction of \Cref{eq:R_accel}, and creating a feasible point with:
\begin{equation}
  \thetaccel^{(t)}\eqdef  -\nabla F (r_{\mathrm{acc}}^{(t)}) / \max(\lambda, \normin{X^\top \nabla F( r_{\mathrm{acc}}^{(t)} )}_\infty) \enspace.
\end{equation}

\subsection{Multitask Lasso}
\label{sub:multitask_lasso}

Let $q \in \bbN$ be a number of tasks, and consider an observation matrix $Y \in \bbR^{n \times q}$, whose $i$-th row is the target in $\bbR^q$ for the $i$-th sample. For $\Beta \in \bbR^{p \times q}$, let $\normin{\Beta}_{2,1} = \sum_1^p \norm{\Beta_j}$ (with $\Beta_j \in \bbR^{1 \times q}$ the $j$-th row of $\Beta$).

\begin{definition}\label{def:multitask_lasso}

  The multitask Lasso estimator is defined as the solution of:
  \begin{equation}\label{eq:multitask_lasso}
    \hat \Beta \in \argmin_{\Beta \in \bbR^{n\times q}} \frac{1}{2} \norm{Y - X \Beta}_F^2
         + \lambda  \norm{\Beta}_{2, 1} \enspace.
  \end{equation}
  \end{definition}

Let $j_1 < \dots < j_S$ denote the (row-wise) support of $\hat \Beta$, and let $t$ denote an iteration after support identification.
Note that the guarantees of support identification for multitask Lasso requires more assumptions than the case of the standard Lasso. In particular it requires a source condition which depends on the design matrix $X$. This was investigated for instance by~\citet{vaiter2017model} when considering a proximal gradient descent algorithm.

Let  $\tilde\Beta^{(0)} = \Beta^{(t)}$, and for $s \in [S]$, let $\tilde\Beta^{(s)}$ denote the primal iterate after coordinate $j_s$ has been updated.
Let $s \in [S]$, with $\tilde\Beta^{(s)}$ and $\tilde\Beta^{(s - 1)}$ being equal everywhere, except for their $j_s$ row for which one iteration of proximal block coordinate descent gives, with $\phi(\Beta) \eqdef \Beta_{j_s}  + \frac{1}{\normin{x_{j_s}}^2} x_{j_s}^\top (Y - X\Beta) \in \bbR^{1\times q}$:
\begin{align}\label{eq:bst_update}
    \tilde\Beta^{(s)}_{j_s} = \left(1 - \frac{\lambda / \norm{x_{j_s}}^2}{\normin{\phi(\tilde\Beta^{(s - 1)})}}\right) \phi(\tilde\Beta^{(s - 1)})
    \enspace .
\end{align}
From \Cref{eq:bst_update},
\begin{align}
      X \tilde\Beta^{(s)} - X\tilde\Beta^{(s - 1)}
          &= x_{j_s} (\tilde\Beta^{(s)}_{j_s} - \tilde\Beta^{(s - 1)}_{j_s}) \nonumber \\
          &= x_{j_s}
               \left(
                    \frac{1}{\normin{x_{j_s}}^2} x_{j_s}^\top (Y - X\tilde\Beta^{(s- 1)} )
                    -  \frac{\lambda / \norm{x_{j_s}}^2}{\normin{\phi(\tilde\Beta^{(s - 1)})}} \phi(\tilde\Beta^{(s - 1)})
              \right) \enspace.
\end{align}

Using
\begin{align}
         \frac{x + h}{\normin{x + h}} &= \frac{x}{\normin{x}} + \frac{1}{\normin{x}} \Big(\Id - \frac{x x^\top}{\normin{x}^2} \Big) h + o(\normin{h}) \label{eq:dl_x_by_normx},
\end{align}
and introducing $\psi \eqdef e_{j_s}^\top -   \frac{1}{\normin{x_{j_s}}^2} x_{j_s}^\top X \in \bbR^{1 \times p}$, so that $\phi(\Beta) = \phi(\hat \Beta) + \psi(\Beta - \hat \Beta)$, one has the following linearization:
\begin{align}\label{eq:dl_phi_normphi}
  \frac{\phi(\Beta)}{\normin{\phi(\Beta)}}
     &= \frac{\phi(\hat \Beta)}{\normin{\phi(\hat \Beta) }}
         + \frac{1}{\normin{\phi(\hat \Beta)}}
                  \psi(\Beta - \hat \Beta) \left(\Id_q - \frac{\phi(\hat \Beta)^\top \phi(\hat \Beta) }{\normin{\phi(\hat\Beta)}^2}   \right)
                  + o(\Beta - \hat \Beta) \enspace,
\end{align}

which does not allow to exhibit a VAR structure, as $\Beta$ should appear only on the right.
Despite this negative result, empirical results of \Cref{sec:experiments} show that dual extrapolation still provides a tighter dual point in the identified support regime.
\celer's generalization to multitask Lasso consists in using
$d_j^{(t)}= (1 - \normin{x_{j}^\top \Theta^{(t)}})/{\normin{x_{j}}}$
with the dual iterate $\Theta^{(t)} \in \bbR^{n \times q}$.
The inner solver is cyclic block coordinate descent (BCD), and the extrapolation coefficients are obtained by solving \Cref{eq:minimization_c}, which is an easy to solve matrix least-squares problem.
\begin{remark}
    As a concluding remark, we point that for the three models studied here, a VAR structure in the dual implies a VAR structure in the primal, provided $X_{\cS(\hat\beta)}$ has full column rank.
    Indeed, for any matrix $B$ such that $B X_{\cS(\hat\beta)} = \Id_{\normin{\hat\beta}_0}$, after support identification one has $\beta^{(t + 1)}_{\cS(\hat\beta)} = B A X_{\cS(\hat\beta)} \beta^{(t)}_{\cS(\hat\beta)} + Bb$.
    This paves the way for applying the techniques introduced here to extrapolation in the primal, which we leave to future work.
\end{remark}
\section{Working sets}
\label{sec:screening_working_sets}

Being able to construct a better dual point leads to a tighter gap and a smaller upper bound in \Cref{eq:gap_safe_rule}, hence to more features being discarded and a greater speed-up for Gap Safe screening rules.
As we detail in this section, it can easily be integrated in a efficient working set policy.

\subsection{Improved working sets policy}
\label{sub:working_sets}

Working set methods originated in the domains of linear and quadratic programming \citep{Thompson_Tonge_Zionts66,Palacios_Lasdon_Engquist82,Myers_Shih88}, where they are called active set methods.

In the context of this paper, a working set approach starts by solving \Cref{eq:sparse_glm} restricted to a small set of features $\cW^{(0)} \subset [p]$ (the working set), then defines iteratively new working sets $\cW^{(t)}$ and solves a sequence of growing problems \citep{Kowalski_Weiss_Gramfort_Anthoine11,Boisbunon_Flamary_Rakoto14,DeSantis_Lucidi_Rinaldi16}.
It is easy to see that when $\cW^{(t)} \subsetneq \cW^{(t + 1)}$ and when the subproblems are solved up to the precision required for the whole problem, then working sets techniques converge.

It was shown by \citet{Massias_Gramfort_Salmon17} that every screening rule which writes
\begin{equation}\label{eq:dj}
    \forall j \in [p], \quad d_j > \tau \Rightarrow \betaopt_j = 0 \enspace,
\end{equation}
allows to define a working set policy.
For example for Gap Safe rules,
\begin{equation}\label{eq:djtheta}
    d_j = d_j(\theta) \eqdef  \frac{ 1 -  \absin{x_{j}^\top \theta}}{\norm{x_{j}}}\enspace,
\end{equation}
is defined as a function of a dual point $\theta \in \Delta_X$.
The value $d_j$ can be seen as measuring the importance of feature $j$, and so given an initial size $p^{(1)}$ the first working set can be defined as:
\begin{equation}\label{eq:first_working_set}
  \cW^{(1)} = \{j_1^{(1)},\dots, j_{p^{(1)}}^{(1)}\}\enspace,
\end{equation}
with $d_{j_1^{(1)}}(\theta) \leq \dots \leq d_{j^{(1)}_{p^{(1)}}}(\theta) < d_j(\theta), \, \forall j \notin \cW^{(0)}$, \ie the indices of the $p^{(1)}$ smallest values of $d(\theta)$.
Then, the \emph{subproblem solver} is launched on $X_{\cW^{(1)}}$.
New primal and dual iterates are returned, which allow to recompute $d_j$'s and define iteratively:
\begin{equation}\label{eq:working_set_iterative}
  \cW^{(t + 1)} = \{j_1^{(t + 1)},\dots, j_{p^{(t + 1)}}^{(t + 1)}\}\enspace,
\end{equation}
where we impose $d_j(\theta) = -1$ when $\beta_j^{(t)} \neq 0$ to keep the active features in the next working set.
As in \citet{Massias_Gramfort_Salmon18}, we choose $p^{(t)} = \min(p, 2 \normin{\beta^{(t)}}_0)$ to ensure a fast initial growth of the working set, and avoid growing too much when the support is nearly identified.
The stopping criterion for the inner solver on $\cW^{(t)}$ is to reach a gap lower than a fraction $\rho = 0.3$ of the duality gap for the whole problem, $\cP(\beta^{(t)}) - \cD(\theta^{(t)})$.
These adaptive working set policies are commonly used in practice \citep{Johnson_Guestrin15,Johnson_Guestrin18}.

Combined with coordinate descent as an inner solver, this algorithm was coined \celer (Constraint Elimination for the Lasso with Extrapolated Residuals) when addressing the Lasso problem.
The results of \Cref{sec:others_models} justify the use of dual extrapolation for any sparse GLM, thus enabling us to generalize \celer to the whole class of models (\Cref{alg:celer}).

{\fontsize{4}{4}\selectfont
\begin{algorithm}[t]
\setcounter{AlgoLine}{0}
\SetKwInOut{Input}{input}
\SetKwInOut{Init}{init}
\SetKwInOut{Parameter}{param}
\caption{\celer for \Cref{eq:sparse_glm}}
\label{alg:celer}
\Input{$X, y, \lambda, \beta^{(0)}, \theta^{(0)}$}
\Parameter{$K=5, p^{(1)}=100,\epsilon, \maxws$}
\Init{ $\cW^{(0)} = \emptyset$}%
\lIf(\tcp*[h]{warm start}){$\beta^{(0)} \neq \bold{0}_p$}{$p^{(1)} = |\cS(\beta^{(0)})|$}%
\For{
        $t = 1, \ldots, \emph{\maxws}$
    }
    {
        compute $\thetaresiduals^{(t)}$ \tcp*[l]{\Cref{eq:theta_res}}

        \If{\emph{solver is Prox-Celer}}
        {
        do $K$ passes of CD on the support of $\beta^{(t)}$, extrapolate to produce  $\thetaccel^{(t-1)}$\label{algoline:extra_for_ws}

        $\thetainner^{(t-1)} = \argmax_{\theta \in \{\theta^{(t-1)}, \thetainner^{(t - 1)}\}} \cD(\theta)$
        }

        $\theta^{(t)} = \argmax_{\theta \in \{\theta^{(t-1)}, \thetainner^{(t - 1)}, \thetaresiduals^{(t)}\}} \cD(\theta)$ %

        $g^{(t)} = \cP(\beta^{(t-1)}) - \cD(\theta^{(t)})$\tcp*[l]{global gap}
        \lIf{ $g^{(t)} \leq \epsilon$}{break}

        $\epsilon^{(t)}, \cW^{(t)} = \texttt{create\_WS()}$ \tcp{get tolerance and working set with \Cref{alg:create_WS}}

        \tcp{Subproblem solver is \Cref{alg:CD_ISTA_dual_extrapolation} or \labelcref{alg:newton_celer} for Prox-Celer:}

        get  \!$\tilde{\beta}^{(t)}, \thetainner^{(t)}$ with subproblem solver applied to $(X_{\cW^{(t)}}, y, \lambda, (\beta^{(t - 1)})_{\cW^{(t)}}, \epsilon^{(t)})$

        $\thetainner^{(t)} = \thetainner^{(t)} / \max(1, \normin{X^\top \thetainner^{(t)}}_\infty)$

        set $\beta^{(t)} = \bold{0}_p$ and $(\beta^{(t)})_{\cW^{(t)}} = \tilde{\beta}^{(t)}$

    }
\Return{$\beta^{(t)}, \theta^{(t)}$}
\end{algorithm}
}
{\fontsize{4}{4}\selectfont
\begin{algorithm}[t]
\setcounter{AlgoLine}{0}
\SetKwInOut{Input}{input}
\SetKwInOut{Init}{init}
\SetKwInOut{Parameter}{param}
\caption{$\texttt{create\_WS}$}
\label{alg:create_WS}
\Input{$X, y, \lambda, \beta^{(t- 1)}, \theta^{(t)}, \cW^{(t-1)}, g^{(t)} $}
\Parameter{$p^{(1)}=100, \rho=0.3$}
\Init{$d = \mathbf{0}_p$}

\For{$j = 1, \ldots, p$}
{
    \lIf%
    { $\beta^{(t- 1)}_j \neq 0$}
    {$d_j^{(t)}= - 1$}
    \lElse{$d_j^{(t)} = (1 - \absin{x_{j}^\top \theta^{(t)}})/{\normin{x_{j}}}$}
}

    $\epsilon^{(t)}= \rho g^{(t)}$

    \lIf{$t \geq 2$}{$p^{(t)}= \min(2  \normin{\beta^{(t-1)}}_0, p)$}

$\cW^{(t)} = \condsetin{j\in [p]}{ d^{(t)}_j \text{ among } p^{(t)} \text{ smallest values of} \; d^{(t)}}$

\Return{$\epsilon^{(t)}, \cW^{(t)}$}
\end{algorithm}
}

\subsection{Newton-\celer}
\label{sub:newton_celer}
When using a squared $\ell_2$ loss, the curvature of the loss is constant: for the Lasso and multitask Lasso, the Hessian does not depend on the current iterate.
This is however not true for other GLM data fitting terms, \eg Logistic regression, for which taking into account the second order information proves to be very useful for fast convergence
\citep{Hsieh_Sustik_Dhillon_Ravikumar14}.
To leverage this information, we can use a prox-Newton method \citep{Lee_Sun_Saunders12,Scheinberg_Tang13} as inner solver; an advantage of dual extrapolation is that it can be combined with \emph{any} inner solver, as we detail below.
For reproducibility and completeness, we first briefly detail the Prox-Newton procedure used.
In the following and in \Cref{alg:newton_celer,alg:newton_direction,alg:backtracking} we focus on a single subproblem optimization, so for lighter notation we assume that the design matrix $X$ is already restricted to features in the working set.
The reader should be aware that in the rest of this section, $\beta$, $X$ and $p$ in fact refers to $\beta_{\cW^{(t)}}$, $X_{\cW^{(t)}}$, and $p^{(t)}$.

Writing the data-fitting term $f (\beta) = F(X\beta)$, we have $\nabla^2 f(\beta) = X^\top D X$,
where $D \in \bbR^{n \times n}$ is diagonal with $f_i''(\beta^\top \bfx_i)$ as its $i$-th diagonal entry.
Using $H = \nabla^2 f(\beta^{(t)})$ we can approximate the primal objective by\footnote{$H$ and $D$ should read $H^{(t)}$ and $D^{(t)}$ as they depend on $\beta^{(t)}$; we omit the exponent for brevity.}
\begin{align}\label{eq:quadratic_approx}
    f(\beta^{(t)}) + \nabla f(\beta^{(t)})^\top (\beta - \beta^{(t)})
                + \frac{1}{2}(\beta - \beta^{(t)})^\top H (\beta - \beta^{(t)})
            + \lambda \norm{\beta}_1 \enspace.
\end{align}
Minimizing this approximation yields the direction $\Delta^{(t)}$ for the \emph{proximal Newton} step:
\begin{align}\label{eq:prox_grad}
    \Delta^{(t )} + \beta^{(t)}
        &= \argmin_\beta \frac{1}{2} \norm{\beta - \beta^{(t)} + H^{-1} \nabla f(\beta^{(t)})}^2_{H} + \lambda \norm{\beta}_1
        \enspace.
\end{align}
Then, a step size $\alpha^{(t)}$ is found by backtracking line search (\Cref{alg:backtracking}), and:
\begin{equation}\label{eq:prox_newton_update}
    \beta^{(t + 1)} = \beta^{(t)} + \alpha^{(t)} \Delta^{(t)} \enspace.
\end{equation}
The approximation in \eqref{eq:quadratic_approx} is the sum of a quadratic function and a $\ell_1$ penalty, hence it can be minimized with proximal coordinate descent.
Since $H = X^\top D X$, coordinate descent can be implemented efficiently by keeping the model fit $X \beta$ up-to-date.
The algorithm is summarized in \Cref{alg:newton_direction}.
{\fontsize{4}{4}\selectfont
\begin{algorithm}[t]
\setcounter{AlgoLine}{0}
\SetKwInOut{Input}{input}
\SetKwInOut{Init}{init}
\SetKwInOut{Parameter}{param}
\caption{\textsc{Prox-Newton subproblem solver} (illustrated on logistic regression)}
\label{alg:newton_celer}
\Input{$X =[x_1 |\dots | x_p] \in \bbR^{n\times p}, y \in \bbR^n, \lambda, \beta^{(0)} \in \bbR^p, \epsilon$}
\Parameter{$\maxcditer=20, \maxbacktrackiter=10, K=5$}
\Init{$\Delta\beta = \mathbf{0}_p, X\Delta\beta = \mathbf{0}_n, \theta^{(0)} = \mathbf{0}_n, D = \mathbf{0}_{n \times n}, L = \mathbf{0}_p, $}%
\For{$t = 1, \ldots, T$}
{

\lFor{$ i = 1, \ldots, n$}
    {
        $D_{ii} = f_i''(\beta^\top \bfx_i) \left(= {\exp(y_i \beta^\top \bfx_i)} / {\left(1 + \exp(y_i \beta^\top \bfx_i) \right)^2} \right)$ }
  \lFor{$ j = 1, \ldots, p$}
    {
        $L_j = \langle x_j, x_j \rangle_D  \left(=  \sum_{i=1}^n x_{ij}^2 {\exp(y_i \beta^\top \bfx_i)}/{\left(1 + \exp(y_i \beta^\top \bfx_i) \right)^2} \right)$
    }

\lIf{$t = 1$}{$\texttt{\maxcditer} = 1$}
\lElse{$\texttt{\maxcditer} = 20$}
$\Delta \beta = \texttt{newton\_direction}(X, y, \beta^{(t - 1)}, D, L=(L_1, \dots, L_p), \maxcditer)$

$\alpha^{(t)} = \texttt{backtracking}(\Delta \beta, X\Delta \beta, y, \lambda, \maxbacktrackiter)$

$\beta^{(t)} = \beta^{(t- 1)} + \alpha^{(t)} \times \Delta \beta$

$\thetaresiduals^{(t)} = -\nabla F(X\beta^{(t)}) / \lambda \left(= - y / (\lambda \mathbf{1}_n + \lambda \exp(y \odot X \beta^{(t)})) \right)$

$\thetaresiduals^{(t)} = \thetaresiduals^{(t)} / \max(1, \normin{X^\top \thetaresiduals^{(t)}}_\infty)$

$\theta^{(t)} = \argmax_{\theta \in \{\theta^{(t- 1)}, \thetaresiduals\}}\cD(\theta)$
\If{$\cP(\beta^{(t)}) - \cD(\theta^{(t)})  < \epsilon$}{break}

}
\Return{$\beta^{(t)}$, $\theta^{(t)}$}
\end{algorithm}
}
{\fontsize{4}{4}\selectfont
\begin{algorithm}[t]
\setcounter{AlgoLine}{0}
\SetKwInOut{Input}{input}
\SetKwInOut{Init}{init}
\SetKwInOut{Parameter}{param}
\caption{$\texttt{newton\_direction}$ (illustrated on logistic regression)}
\label{alg:newton_direction}
\Input{$X =[x_1 |\dots | x_p] \in \bbR^{n\times p}, y \in \bbR^n, \beta \in \bbR^p, D \in\bbR^{n \times n}, L = (x_j^\top D x_j)_j \in \bbR^p, \maxcditer$}
\Parameter{$\epsilon, \mincditer = 2$}
\Init{$\Delta\beta = \mathbf{0}_p, X\Delta\beta = \mathbf{0}_n$}%

\For{$k=1, \dots, \emph{\maxcditer}$}
    {
        $\tau = 0$ \tcp{stopping condition}
        \For{$j = 1 , \dots, p$}
        {
            $u_j = \beta_j + (\Delta\beta)_j$

            $\tilde u_j = \ST\left(\beta_j + (\Delta\beta)_j
            - \frac{1}{L_j} \left(x_j^\top \nabla F(X\beta^{(t)}) -  x_j^\top D X\Delta \beta  \right), \frac{\lambda}{L_j}\right)$

            $(\Delta\beta)_j = \tilde u_j - \beta_j$

            $X\Delta\beta \pluseq (\tilde u_j - u_j) x_j$

            $\tau \pluseq (\tilde u_j - u_j)^2 \times  L_j^2$

        }

        \lIf{$\tau \leq \epsilon$ { \rm and} $k \geq \mathrm{\emph{\mincditer}}$}{break}
    }
\Return{$\Delta \beta$}
\end{algorithm}
}

Contrary to coordinate descent, Newton steps do not lead to an asymptotic VAR, which is required to guarantee the success of dual extrapolation.
To address this issue, we compute $K$ passes of cyclic coordinate descent restricted to the support of the current estimate $\beta$ before defining a working set (\Cref{alg:celer}, line \ref{algoline:extra_for_ws}).
The $K$ values of $X\beta$ obtained allow for the computation of both $\thetaccel$ and $\thetaresiduals$.
The motivation for restricting the coordinate descent to the support of the current estimate $\beta$
comes from the observation that dual extrapolation proves particularly useful once the support is identified.

The Prox-Newton solver we use is detailed in \Cref{alg:newton_celer}.
When \Cref{alg:celer} is used with \Cref{alg:newton_celer} as inner solver, we refer to it as the Newton-\celer variant.

\paragraph{Values of parameters and implementation details}
In practice, Prox-Newton implementations such as GLMNET \citep{Friedman_Hastie_Tibshirani10}, newGLMNET \citep{Yuan_Ho_Lin12} or QUIC \citep{Hsieh_Sustik_Dhillon_Ravikumar14} only solve the direction approximately in \Cref{eq:prox_grad}.
How inexactly the problem is solved depends on some heuristic values.
For reproducibility, we expose the default values of these parameters as inputs to the algorithms.
Importantly, the variable $\maxcditer$ is set to 1 for the computation of the first Prox-Newton direction.
Experiments have indeed revealed that a rough Newton direction for the first update was sufficient and resulted in a substantial speed-up.
Other parameters are set based on existing Prox-Newton implementations such as \blitz.
{\fontsize{4}{4}\selectfont
\begin{algorithm}[t]
\setcounter{AlgoLine}{0}
\SetKwInOut{Input}{input}
\SetKwInOut{Init}{init}
\SetKwInOut{Parameter}{param}
\caption{$\texttt{backtracking}$ (illustrated on logistic regression)}
\label{alg:backtracking}
\Input{$\Delta \beta, X\Delta \beta, \lambda$}
\Parameter{$\maxbacktrackiter = 20$}
\Init{$\alpha = 1$}%
\For{$k = 1, \dots, \text{\upshape \maxbacktrackiter}$}
    {
      $\delta = 0$

        \For{$j = 1, \dots, p$}
            {

                \lIf{$\beta_j + \alpha \times (\Delta \beta)_j < 0$}{$\delta \minuseq \lambda  (\Delta \beta)_j$}
                \lElseIf{$\beta_j + \alpha \times (\Delta \beta)_j > 0$}{$\delta \pluseq \lambda  (\Delta \beta)_j$}
                \lElseIf{$\beta_j + \alpha \times (\Delta \beta)_j = 0$}{$\delta \minuseq \lambda \abs{(\Delta \beta)_j}$}
            }

        $\aux = \nabla F (X\beta + \alpha \times  X\Delta\beta)  \left(= - y \odot \sigma(-y \odot (X\beta + \alpha \times X\Delta\beta)) \right)$

        $\delta \pluseq (X\Delta \beta)^\top \aux$

        \lIf{$\delta < 0$}{break}
        \lElse{$\alpha = \alpha / 2$}
    }
\Return{$\alpha$}
\end{algorithm}
}
\section{Experiments}
\label{sec:experiments}
In this section, we numerically illustrate the benefits of dual extrapolation on various data sets.
Implementation is done in Python, Cython \citep{Behnel_etal11} and numba \citep{Lam_Pitrou_Seibert15} for the low-level critical parts.
The solvers exactly follow the \texttt{scikit-learn} API \citep{Pedregosa_etal11,sklearn_api}, so that \celer can be used as a drop-in replacement in existing code.
The package is available under BSD3 license at \url{https://github.com/mathurinm/celer},
 with documentation and examples at \url{https://mathurinm.github.io/celer}.

In all this section, the estimator-specific $\lambda_{\max}$ refers to the smallest value giving a null solution (for instance $\lambda_{\max}=\normin{X^\top y}_\infty$ in the Lasso case, $\lambda_{\max}=\normin{X^\top y}_\infty / 2$ for sparse logistic regression, and $\lambda_{\max}=\normin{X^\top Y}_{2, \infty}$ for the Multitask Lasso).

 \begin{table}[h!]
     \centering
     \caption{Characteristics of datasets used}
     \label{tab:characteristics_datasets}
     \begin{center}
     \begin{small}
     \begin{tabular}{lcccc}
         \toprule
         name   & $n$ & $p$ & $q$ & density \\
         \midrule
         \emph{leukemia}      & $72$    & $\num{7129}$  & - & $1$   \\ %
          \emph{news20}     & $\num{19996}$   & $\num{632983}$  & - & $6.1\, 10^{-4}$   \\ %
          \emph{rcv1\_train} & $\num{20242}$ & $\num{19960}$  & - & $3.7 \, 10^{-3}$ \\
          \emph{finance (log1p)} & $\num{16087}$   & $ \num{1668738}$ & -  & $3.4 \, 10^{-3}$ \\
          Magnetoencephalography (MEG) & $305$ & $\num{7498}$ & 49 & $1$ \\
         \bottomrule
     \end{tabular}
     \end{small}
     \end{center}
 \end{table}

\subsection{Illustration of dual extrapolation}

\begin{figure}[h!]
  \begin{subfigure}[b]{0.9\columnwidth}
    \centering
    \quad\quad
    \includegraphics[width=0.85\linewidth]{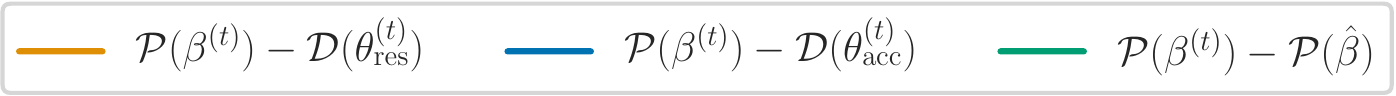} \\
  \end{subfigure}

  \begin{subfigure}[b]{0.5\columnwidth}
    \centering
    \includegraphics[width=0.95\linewidth]{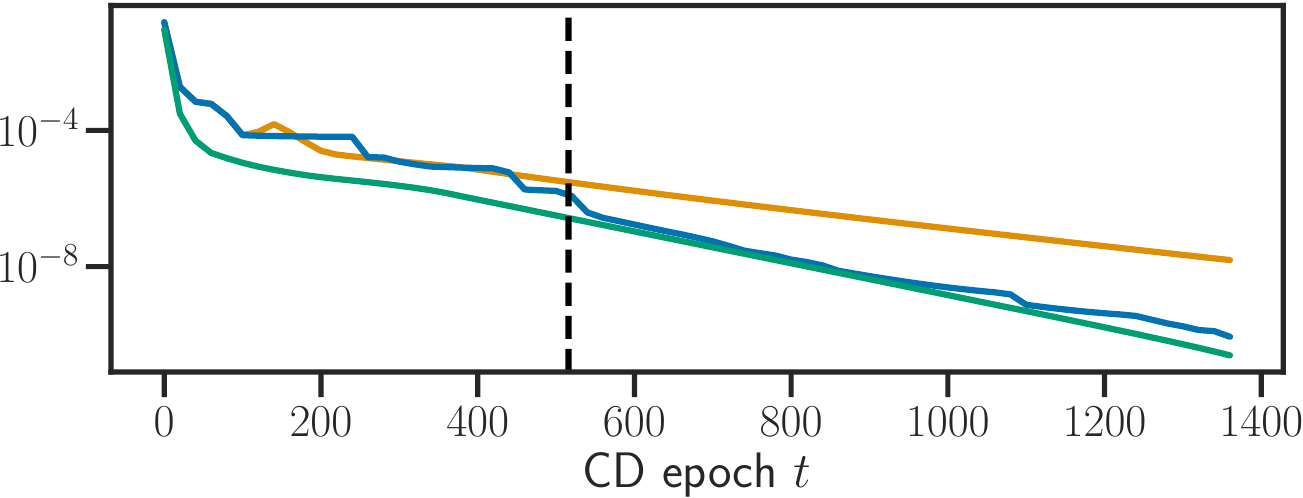}
    \caption{Lasso, on \emph{news20} for $\lambda = \lambda_{\max} / 5$.}
    \label{fig:VAR_lasso}
  \end{subfigure}
  \begin{subfigure}[b]{0.5\columnwidth}
    \centering
      \includegraphics[width=0.95\linewidth]{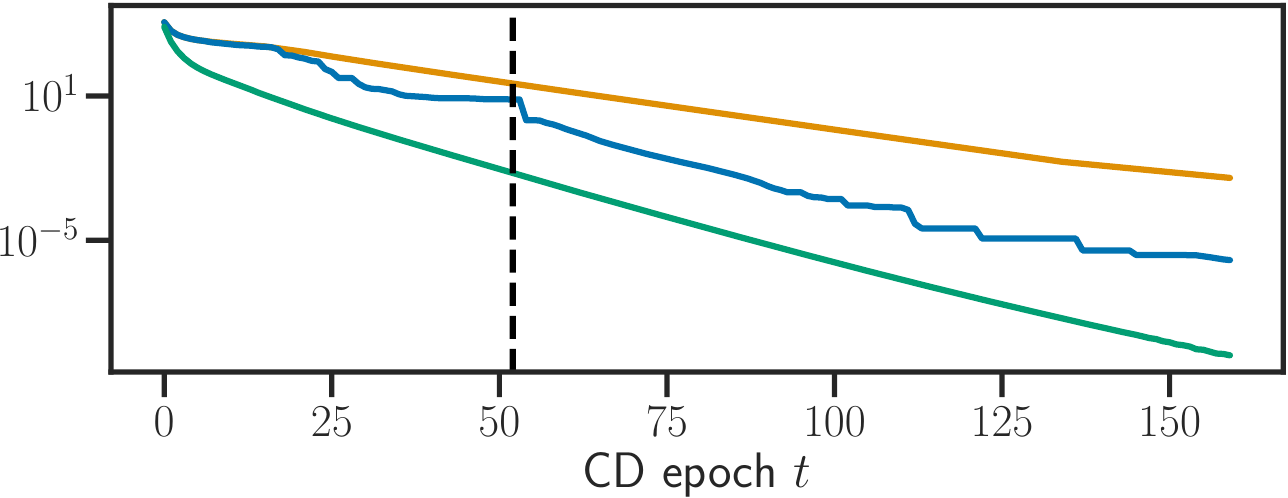}
    \caption{Log. reg., on \emph{rcv1 (train)} for $\lambda = \lambda_{\max} / 20$.}
    \label{fig:VAR_logreg}
  \end{subfigure}
  \begin{subfigure}[b]{\columnwidth}
      \centering
        \includegraphics[width=0.5\linewidth]{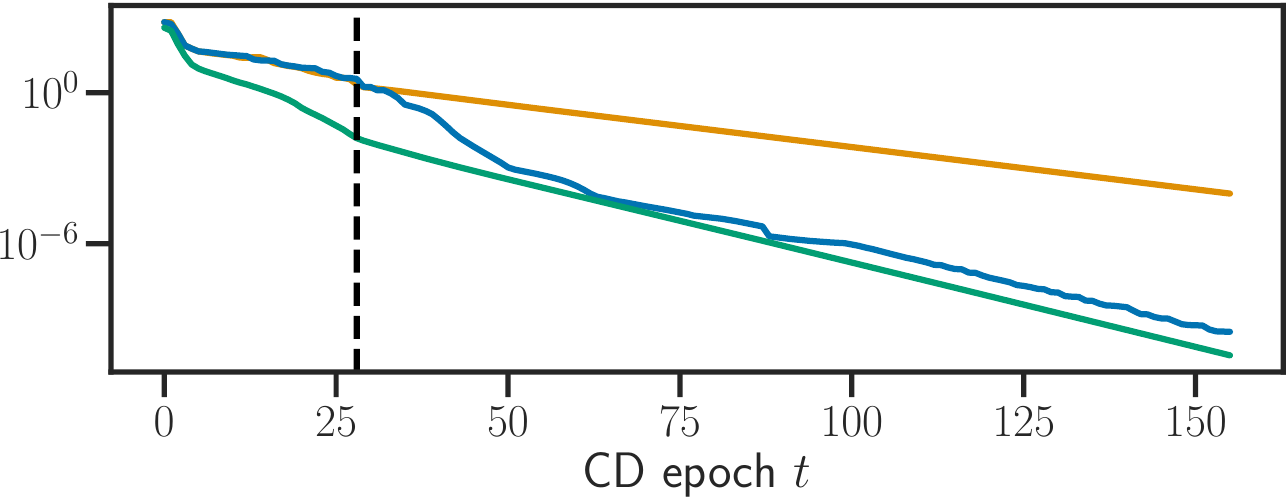}
      \caption{Multitask Lasso, on MEG data for $\lambda = \lambda_{\max} / 10$.}
    \label{fig:VAR_MTL}
  \end{subfigure}
  \caption{Dual objectives with classical and proposed approach, for Lasso (top left), Logistic regression (top right), Multitask Lasso (bottom).
           The dashed line marks sign identification (support identification for Multitask Lasso).}
\end{figure}

For the Lasso (\Cref{fig:VAR_lasso}), Logistic regression (\Cref{fig:VAR_logreg}) and Multitask Lasso (\Cref{fig:VAR_MTL}), we illustrate the applicability of dual extrapolation.
Monotonicity of the duality gap computed with extrapolation is enforced via the construction of \Cref{eq:robust_extrapolation}.
For all problems, the figures show that $\thetaccel$ gives a better dual objective after sign identification, with a duality gap sometimes even matching the suboptimality gap.
They also show that the behavior is stable before identification.

In particular, \Cref{fig:VAR_MTL} hints that dual extrapolation works in practice for the Multitask Lasso, even though there is no such result as sign identification, and we are not able to exhibit a VAR behavior for $(X\Beta^{(t)})_{t \in \bbN}$.
\Cref{fig:illustration_dual} suggests that the lower the stopping criterion threshold $\epsilon$, the higher the impact of dual extrapolation is.
However, when combined with screening, this improvement can be less visible in terms of time: if a large number of variables are screened before support identification, the later iterations concern a very small number of features.
In this case, decreasing the duality gap by running the solver longer after screening is not costly.

\subsection{Alternative exploitation of VAR structure}
Once one postulates that $\thetaopt$ is a linear combination of the $K$ most recent residuals, alternatives to our proposed dual extrapolation can be investigated to determine the coefficients of this combination.
This is particularly appealing in the Lasso case, for which the dual \Cref{pb:dual_sparse_glm} is:
\begin{problem}\label{pb:lasso_dual}
    \hat\theta = \argmax_{\theta \in \Delta_X} \; \frac12 \normin{y}^2 - \frac{\lambda^2}{2} \normin{y/\lambda - \theta}^2 \enspace.
\end{problem}
In this case, assuming that $\hat\theta$ belongs to $\Span(r^{(t)}, \ldots, r^{(t - K + 1)})$, we can reformulate \Cref{pb:lasso_dual} as a $K$-dimensional quadratic program, and directly optimize over the extrapolation coefficients.
If we write $R = [r^{(t)} | \ldots | r^{(t - K + 1)}] \in \bbR^{n \times K}$ and assume that $\hat\theta = R \hat c$, then \Cref{pb:lasso_dual} is equivalent to:
\begin{problem}\label{pb:lasso_dual_5d}
    \begin{aligned}
    \hat c &= \argmin_{c \in \bbR^K} \; \frac{1}{2} \normin{y/\lambda - Rc}^2  \text{\quad subject to \quad}  - \mathbf{1}_p \preceq X^\top R c \preceq \mathbf{1}_{p} \\
    &= \argmin_{c \in \bbR^K} \; \frac{\lambda}{2} c^\top (R^\top R) c - (R^\top y)^\top c  \text{\quad subject to \quad}  Ac \preceq \mathbf{1}_{2p} \enspace,
    \end{aligned}
\end{problem}
where $A^\top = [R^\top X ; - R^\top X]^\top \in \bbR^{2p \times K}$.
\Cref{pb:lasso_dual_5d} can be solved straightforwardly with solvers such as CVXPY \citep{cvxpy}, which we use in \Cref{fig:optim_5d}.
As visible on the latter, the QP approach seems to suffer more from numerical instabilities: at some iterations, CVXPY does not converge, which we represent by setting the dual objective to 0, hence the visible peaks.
Although it performs similarly to dual extrapolation at first, the QP dual point appears to eventually perform the same as residuals rescaling.
We do not perform an extensive time study of the compared approaches, but have observed that the $2p$ constraints of \Cref{pb:lasso_dual_5d} make it orders of magnitude slower to solve.
In practice, we therefore had to limit the experiment to the rather small \emph{leukemia} dataset to get reasonable running times.
Finally, the QP approach does not lead to simple optimization problems for sparse logistic regression and Multitask Lasso.

\begin{figure}
    \centering
    \includegraphics[width=\linewidth]{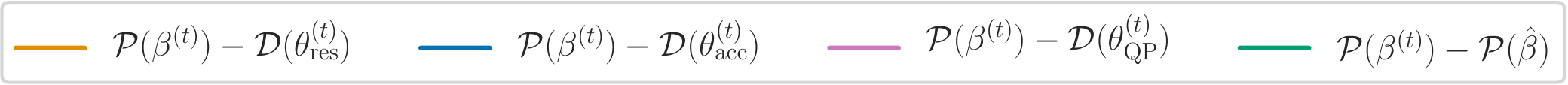} \\
    \includegraphics[width=0.9\linewidth]{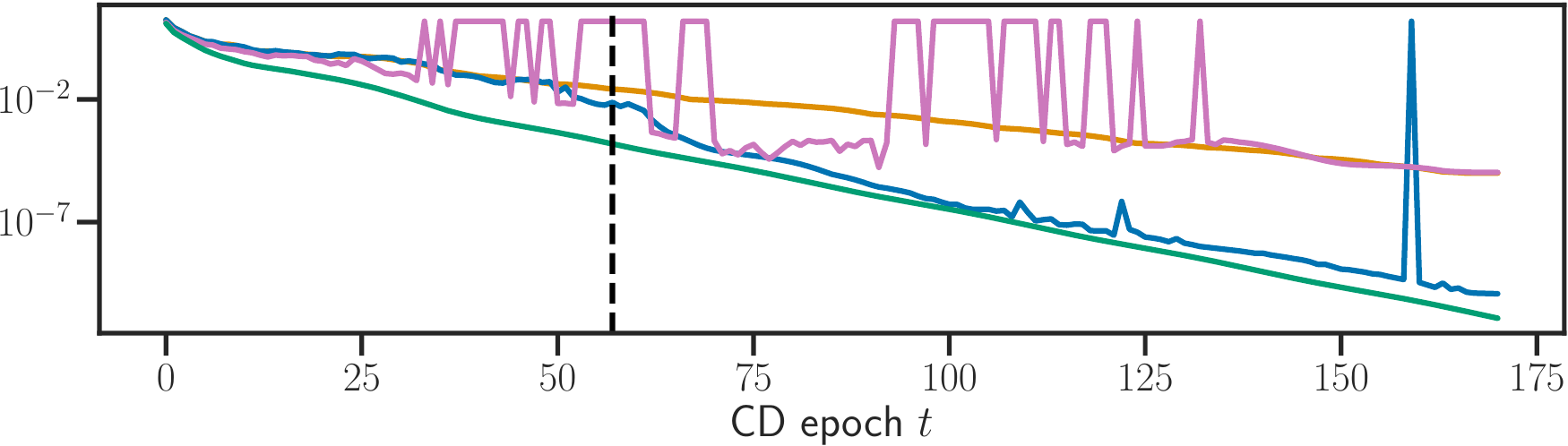}
    \caption{
    Duality gaps evaluated with rescaled residuals (yellow), our proposed dual extrapolation (blue), QP approach (purple) and optimal dual point (green), on the \emph{leukemia} dataset, with $\lambda = \lambda_{\max} / 5$ resulting in 23 non-zero coefficients at optimum. The dashed line marks support identification. Peaks occur because we set the duality gap to 0 when a method numerically fails to produce extrapolation coefficients.}
    \label{fig:optim_5d}
\end{figure}

\subsection{Improved screening and working set policy}
\label{sub:better_working_set_policy}

In order to have a stopping criterion scaling with $n$, the solvers are stopped when the duality gap goes below $\epsilon \times F(\mathbf{0}_n)$.
Features are normalized to have norm 1, and for sparse datasets, features with strictly less than 4 non-zero entries are removed.

\def\thisfigswidth{0.48\textwidth}

\subsubsection{Lasso}

\paragraph{Path computation}
For a fine (resp. coarse) grid of 100 (resp. 10) values of $\lambda$ geometrically distributed between $\lambda_{\max}$ and $\lambda_{\max}/100$,
the competing algorithms solve the Lasso on various real world datasets from LIBSVM\footnote{\url{https://www.csie.ntu.edu.tw/~cjlin/libsvmtools/datasets/}} \citep{Fan_Chang_Hsieh_Wang_Lin08}.
Warm start is used for all algorithms: except for the first value of $\lambda$, the algorithms are initialized with the solution obtained for the previous value of $\lambda$ on the path.
Note that paths are computed following a decreasing sequence of $\lambda$ (from high value to low).
Computing Lasso solutions for various values of $\lambda$ is a classical task, in cross-validation for example.
The values we choose for the grid are the default ones in \texttt{scikit-learn} or GLMNET.
For Gap Safe Rules (GSR), we use the strong warm start variant which was shown by \citet[Section 4.6.4]{Ndiaye_Fercoq_Gramfort_Salmon16b} to have the best performance.
We refer to ``GSR + extr.'' when, on top of this, our proposed dual extrapolation technique is used to create the dual points for screening.
To evaluate separately the performance of working sets and extrapolation, we also implement ``\celer w/o extr.'', \ie \Cref{alg:celer} without using extrapolated dual point.
Doing this, GSR can be compared to GSR + extrapolation, and \celer without extrapolation to \celer.
Finally, we also add the performance of Blitz\footnote{\url{https://github.com/tbjohns/BLitzL1}} \citep{Johnson_Guestrin18} and StingyCD\footnote{\url{https://github.com/tbjohns/StingyCD}} \citep{Johnson_Guestrin17}, the latter being a Lasso-specific coordinate descent designed to skip zero-to-zero updates.
Note that dual extrapolation could easily be combined with the update policy of StingyCD.
For fair comparison, all algorithms use the duality gap as a stopping criterion.

On \Cref{fig:ws_leukemia_lasso,fig:ws_news20_lasso,fig:ws_rcv1_lasso}, one can see that using acceleration systematically improves the performance of Gap Safe rules, up to a factor 3.
Similarly, dual extrapolation makes \celer more efficient than a WS approach without extrapolation (\blitz or \celer w/o extr.)
This improvement is more visible for low values of stopping criterion $\epsilon$, as dual extrapolation is beneficial once the support is identified.
Generally, working set approaches tend to perform better on coarse grid, while screening is beneficial on fine grids -- a finding corroborating Lasso experiments in \citet[Sec. 6.1]{Ndiaye_Fercoq_Gramfort_Salmon16b}.
Indeed, on a fine grid, the value of the regularizer $\lambda$ changes slowly and each solution on the grid is close to the previous one.
In this case, when warm-start is used, the initialization (approximate solution for the previous value of the regularizer) is close to the solution for the new value of the regularizer, and the duality gap always remains low, allowing to quickly screen features.
On the contrary, if the grid is coarse, each problem on the grid is quite different from the previous one.
Warm start here provides a less useful initialization as the duality gap is higher for the early iterations of each problem.
This results in a reduced efficiency of screening.

\begin{figure}[htbp]
    \centering
    \includegraphics[width=0.95\linewidth]{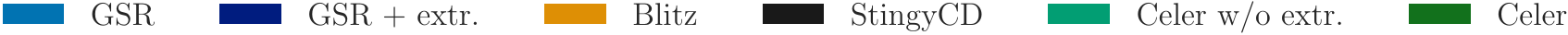} \\
    \includegraphics[width=0.45\linewidth]{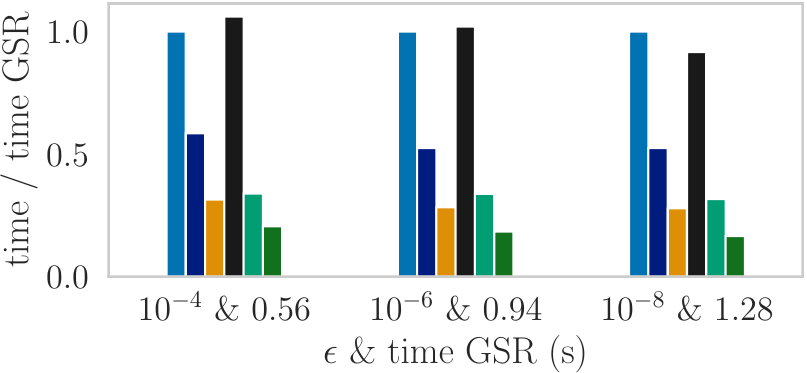}    \includegraphics[width=0.45\linewidth]{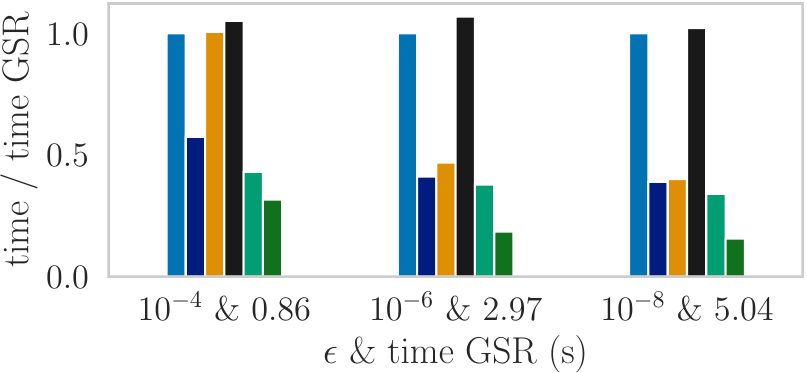}
    \caption{Time to compute a Lasso path from $\lambda_{\max}$ to $\lambda_{\max} / 100$ on the
    \emph{leukemia} dataset (\textbf{left}: coarse grid of 10 values, \textbf{right}: fine grid of 100 values).
    $\lambda_{\max} / 100$ gives a solution with \num{60} nonzero coefficients.}
    \label{fig:ws_leukemia_lasso}
\end{figure}

\begin{figure}[htbp]
    \centering
    \includegraphics[width=0.95\linewidth]{bench_blitz_gsr_lasso_wstingy_legend-crop} \\
    \includegraphics[width=0.45\linewidth]{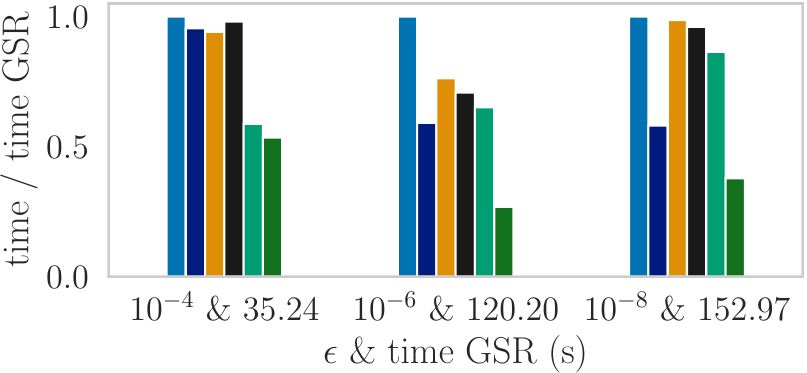}
    \includegraphics[width=0.45\linewidth]{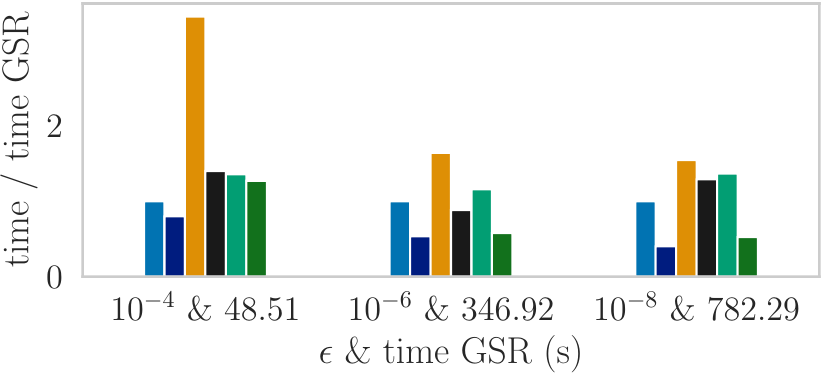}
    \caption{Time to compute a Lasso path from $\lambda_{\max}$ to $\lambda_{\max} / 100$ on the
    \emph{news20} dataset (\textbf{left}: coarse grid of 10 values, \textbf{right}: fine grid of 100 values).
    $\lambda_{\max} / 100$ gives a solution with \num{14817} nonzero coefficients.}
    \label{fig:ws_news20_lasso}
\end{figure}

\begin{figure}[htbp]
    \centering
    \includegraphics[width=0.95\linewidth]{bench_blitz_gsr_lasso_wstingy_legend-crop} \\
    \includegraphics[width=0.45\linewidth]{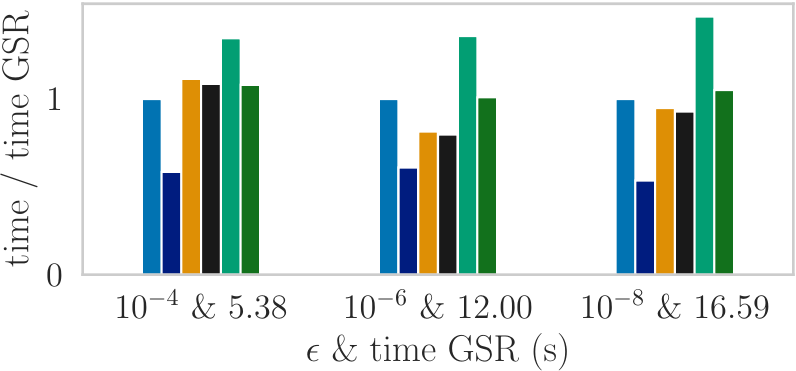}
    \includegraphics[width=0.45\linewidth]{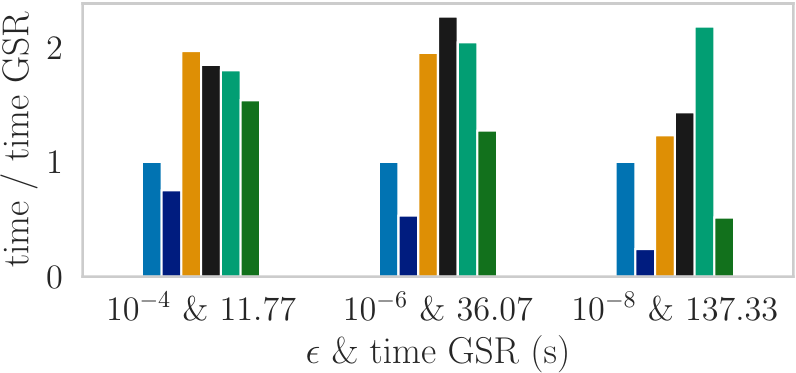}
    \caption{Time to compute a Lasso path from $\lambda_{\max}$ to $\lambda_{\max} / 100$ on the
    \emph{rcv1} dataset (\textbf{left}: coarse grid of 10 values, \textbf{right}: fine grid of 100 values).
    $\lambda_{\max} / 100$ gives a solution with \num{4610} nonzero coefficients.}
    \label{fig:ws_rcv1_lasso}
\end{figure}

\paragraph*{Single $\lambda$ }
\begin{table}[]
    \centering
    \caption{Computation time (in seconds) for \celer, \blitz and scikit-learn to reach a given precision $\epsilon$ on the \emph{Finance} dataset with $\lambda = \lambda_{\max}/20$ (without warm start: $\beta^{(0)}=\bold{0}_p$).}
    \label{tab:single_lambda_blitz_sklearn}
    \medskip
    \begin{center}
    \begin{small}
    \begin{tabular}{lcccc}
        \toprule
        $\epsilon$   & $10^{-2}$ & $10^{-3}$ & $10^{-4}$ & $10^{-6}$\\
        \midrule
        \celer       & $5$    & $7$    & $8$    &  $10$ \\
        \blitz        & $25$   & $26$   & $27$   & $30$ \\
        scikit-learn & $470$  & $\num{1350}$ & $\num{2390}$ &  - \\
        \bottomrule
    \end{tabular}
    \end{small}
    \end{center}
\end{table}
The performance observed in the previous paragraph is not only due to the sequential setting: in the experiment of \Cref{tab:single_lambda_blitz_sklearn}, we solve the Lasso for a single value of $\lambda=\lambda_{\max} / 20$.
The duality gap stopping criterion $\epsilon$ varies between $10^{-2}$ and $10^{-6}$.
\celer is orders of magnitude faster than scikit-learn, which uses vanilla CD.
The working set approach of \blitz is also outperformed, especially for low $\epsilon$ values.

\subsubsection{Logistic regression}

In this section, we evaluate separately the first order solvers (Gap Safe, Gap Safe with extrapolation, \celer with coordinate descent as inner solver), %
and the Prox-Newton solvers: \blitz, Newton-\celer with working set but without using dual extrapolation (PN WS), and Newton-\celer.

\begin{figure}[htbp]
    \centering
    \includegraphics[width=0.7\linewidth]{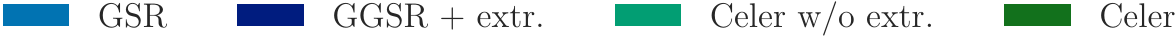} \\
    \includegraphics[width=\thisfigswidth]{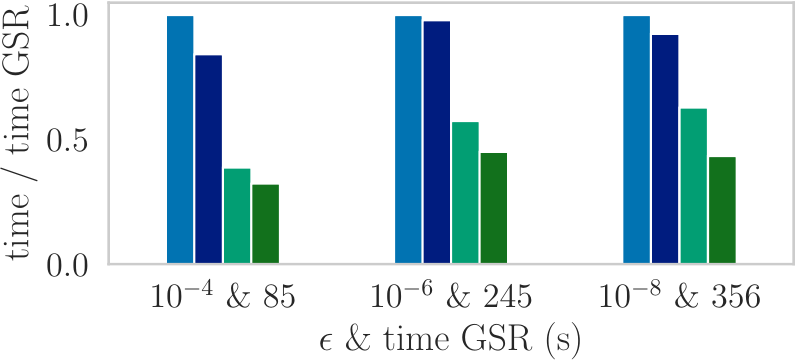}
    \includegraphics[width=\thisfigswidth]{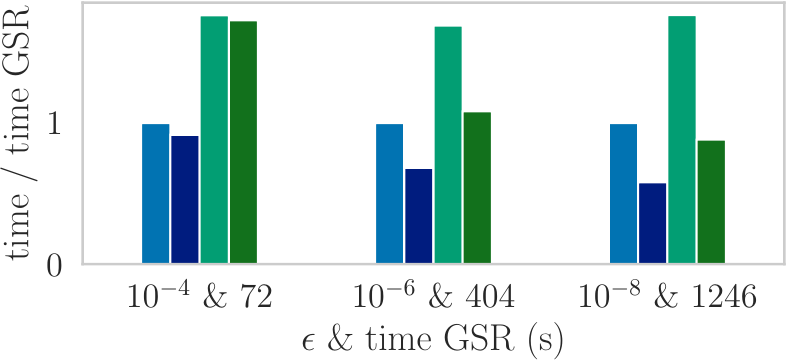}
    \caption{Time to compute a Logistic regression path from $\lambda_{\max}$ to $\lambda_{\max} / 100$ on the
    \emph{news20} dataset  (\textbf{left}: coarse grid of 10 values, \textbf{right}: fine grid of 100 values). $\lambda_{\max} / 100$ gives 5319 non-zero coefficients.}
    \label{fig:ws_rcv1_logreg}
\end{figure}

\Cref{fig:ws_rcv1_logreg} shows that when cyclic coordinate descent is used, extrapolation improves the performance of screening rules, and that using a dual-based working set policy further reduces the computational burden.

\begin{figure}[htbp]
    \centering
    \includegraphics[width=0.7\linewidth]{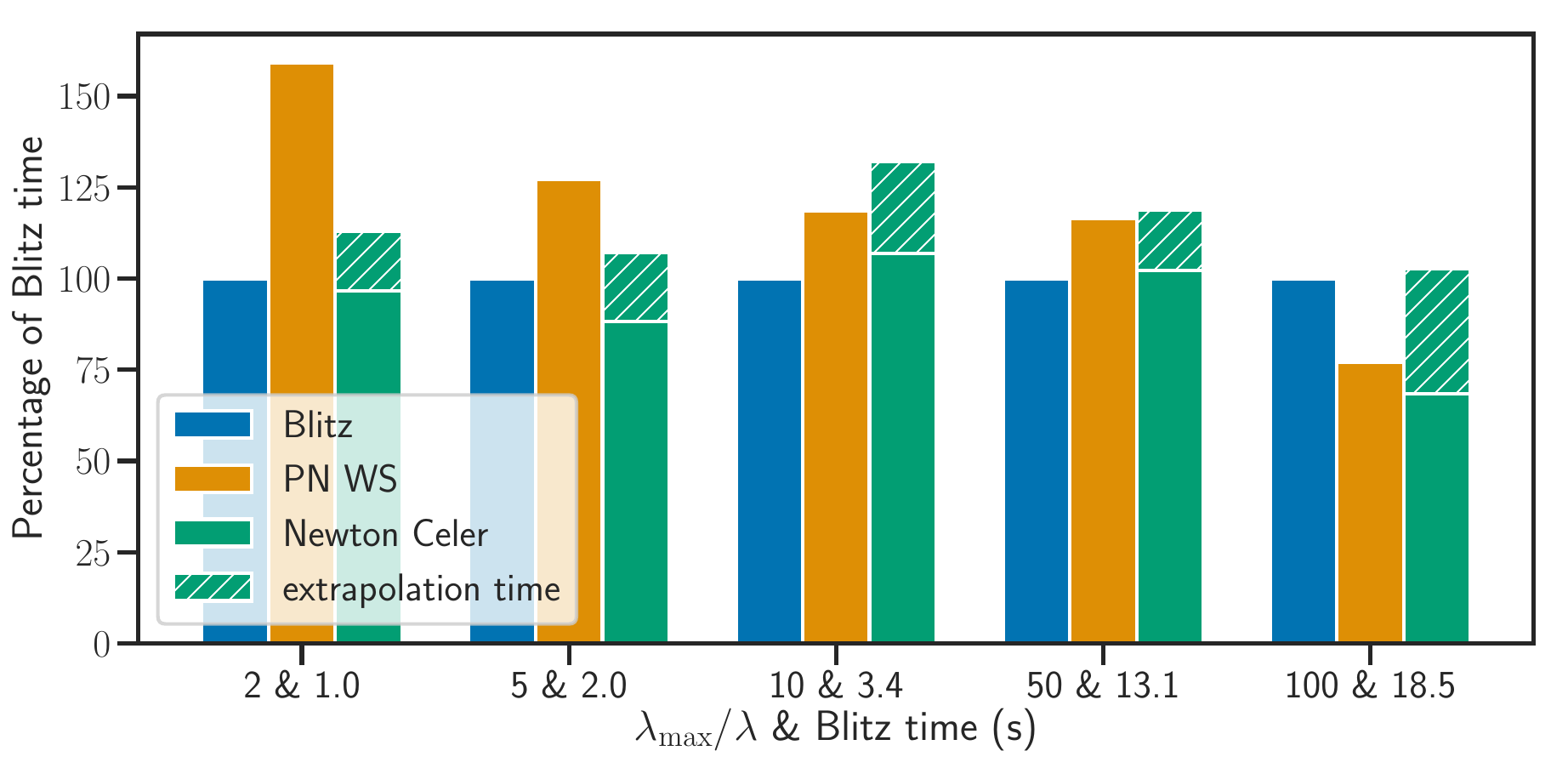}
    \caption{Time to solve a Logistic regression problem for different values of $\lambda$, on the \emph{rcv1 dataset} ($\epsilon = 10^{-6}$).}
    \label{fig:single_alpha_rcv1_logreg}
\end{figure}

\Cref{fig:single_alpha_rcv1_logreg} shows the limitation of dual extrapolation when second order information is taken into account with a Prox-Newton:
because the Prox-Newton iterations do not create a VAR sequence, it is necessary to perform some passes of coordinate descent to create $\thetaccel$, as detailed in \Cref{sub:newton_celer}.
This particular experiment reveals that this additional time unfortunately mitigates the gains observed in better working sets and stopping criterion.

\subsubsection{Multitask Lasso}

The data for this experiment uses magnetoencephalography (MEG) recordings which are
collected for neuroscience studies. Here we use data from the \emph{sample} dataset of
the MNE software \citep{mne}. Data were obtained using auditory stimulation.
There are $n = 305$ sensors, $p = \num{7498}$ source locations in the brain, and the measurements are time series of length $q = 49$.
Using a Multitask Lasso formulation allows to reconstruct brain activitiy exhibiting a stable sparsity pattern across time \citep{Gramfort_Kowalski_Hamalainen12}.
The inner solver for \celer is block coordinate descent, which is also used for the Gap Safe solver \citep{Ndiaye_Fercoq_Gramfort_Salmon15}.

\begin{figure}[htbp]
    \centering
    \includegraphics[width=0.75\linewidth]{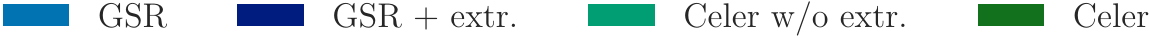} \\
    \includegraphics[width=0.45\linewidth]{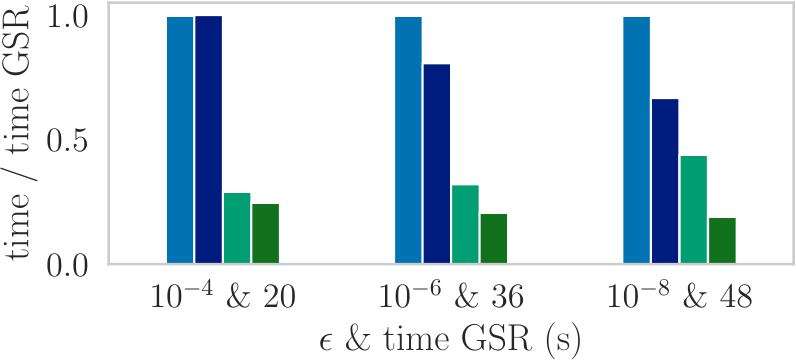}
    \includegraphics[width=0.45\linewidth]{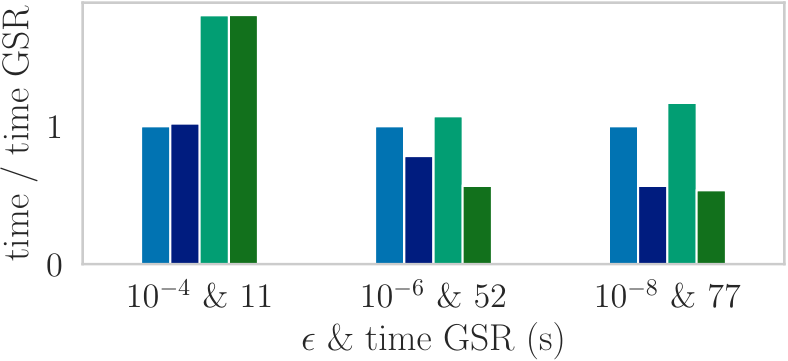}
    \caption{Time to compute a Multitask Lasso path from $\lambda_{\max}$ to $\lambda_{\max} / 100$ on MEG data (\textbf{left}: coarse grid of 10 values, \textbf{right}: fine grid of 100 values).
    $\lambda_{\max} / 100$ gives \num{254} non-zero rows for $\hat \Beta$.}
    \label{fig:mtl_path}
\end{figure}

While \Cref{fig:VAR_MTL} showed that for the Multitask Lasso the dual extrapolation performance also gives an improved duality gap, here \Cref{fig:mtl_path} shows that the working set policy of \celer performs better than Gap Safes rules with strong active warm start.
We could not include \blitz in the benchmark as there is no standard public implementation for this problem.

\section*{Conclusion}
In this work, we generalize the dual extrapolation procedure for the Lasso (\celer) to a wider class of $\ell_1$-penalized problems, in particular sparse Logistic regression.
Theoretical guarantees based on \emph{sign identification} of coordinate descent are provided.
Experiments show that dual extrapolation yields more efficient Gap Safe screening rules and working sets solver.
Finally, we adapt \celer to make it compatible with prox-Newton solvers, and empirically demonstrate its applicability to the Multi-task Lasso, for which we leave the proof to future work.

\acks{This work was funded by the ERC Starting Grant SLAB ERC-StG-676943, by the Chair Machine Learning for Big Data at T\'el\'ecom ParisTech and by the ANR grant GraVa ANR-18-CE40-0005.}

\vskip 0.2in
\bibliography{references_all}

\end{document}